\documentclass[pdflatex,sn-mathphys-num]{sn-jnl}% Math and Physical Sciences Numbered Reference Style
\usepackage{graphicx}%
\usepackage{multirow}%
\usepackage{amsmath,amssymb,amsfonts}%
\usepackage{amsthm}%
\usepackage{mathrsfs}%
\usepackage[title]{appendix}%
\usepackage{xcolor}%
\usepackage{textcomp}%
\usepackage{manyfoot}%
\usepackage{booktabs}%
\usepackage{algorithm}%
\usepackage{algorithmicx}%
\usepackage{algpseudocode}%
\usepackage{listings}%
\usepackage{epstopdf}
\usepackage{float}%
\usepackage{caption}
\usepackage[T1]{fontenc}
\usepackage{lmodern}
\usepackage{microtype}
\theoremstyle{thmstyleone}%
\newtheorem{theorem}{Theorem}%  meant for continuous numbers
\newtheorem{proposition}[theorem]{Proposition}% 

\theoremstyle{thmstyletwo}%
\newtheorem{remark}{Remark}%

\theoremstyle{thmstylethree}%

\begin{document}

\title[Article Title]{Hybrid Parallel Bessel - KAN Model: a Kolmogorov-Arnold network-integrated framework for developing a novel architecture to reconstruct the fused-basis approach via a parameter free quadratic interaction}

%%=============================================================%%
%% GivenName	-> \fnm{Joergen W.}
%% Particle	-> \spfx{van der} -> surname prefix
%% FamilyName	-> \sur{Ploeg}
%% Suffix	-> \sfx{IV}
%% \author*[1,2]{\fnm{Joergen W.} \spfx{van der} \sur{Ploeg} 
%%  \sfx{IV}}\email{iauthor@gmail.com}
%%=============================================================%%

\author[1]{\fnm{Mintu} \sur{Pathak}}\email{onelessmintu1994@gmail.com}

\author[2]{\fnm{Meshv} \sur{Patel}}\email{pmeshw12012004@gmail.com}

\author*[2]{\fnm{Mohendra} \sur{Roy}}\email{mohendra.roy@ieee.org}

\author*[1]{\fnm{Jwngsar} \sur{Brahma}}\email{jwngsar@gmail.com}

\affil[1]{\orgdiv{Department of Applied Sciences}, \orgname{Tezpur University}, \orgaddress{\street{Napaam}, \city{Tezpur}, \postcode{784028}, \state{Assam}, \country{India}}}

\affil[2]{\orgdiv{Department of ICT}, \orgname{Pandit Deendayal Energy University}, \orgaddress{\street{Raysan}, \city{Gandhinagar}, \postcode{382007}, \state{Gujarat}, \country{India}}}

%%==================================%%
%% Sample for unstructured abstract %%
%%==================================%%

\abstract{The recent emergence of the Kolmogorov-Arnold Network has challenged the traditional Multi-Layer Perceptron (MLP) framework by placing learnable activation functions on network edges instead of learnable weights, which offers promising theoretical scaling and interpretability. Yet, scaling KANs to complex spatial data, such as image recognition, exposes limitations in their default B-Spline basis functions, resulting in computational bottlenecks and the curse of dimensionality. While recent hybrid architectures attempt to resolve representational blind spots by mixing multiple basis families on a single edge, this monolithic approach introduces severe structural limitations. Specifically, superimposing localized B-Splines with structural, wave-like Bessel functions creates catastrophic geometric overlap, which drives the design matrix into severe numerical ill-conditioning ($\kappa(G) \approx 10^{16}$) and parameter explosion. In an attempt to resolve this mathematical bottleneck, we have proposed the Parallel BesselKAN, a novel architecture that fundamentally restructures the fused-basis approach. Instead of forcing multiple function families to share the same edge, our model isolates a B-Spline branch and a low-order Bessel branch into distinct, parallel processing pipelines integrated via a parameter-free quadratic interaction. Through rigorous empirical benchmarking and spectral matrix analysis, we demonstrate that this parallelization acts as a highly efficient internal ensemble. The decoupled architecture successfully bypasses the computational and stability bottlenecks of monolithic additive models, compressing the parameter footprint by 35\% while achieving a competitive peak accuracy of 97.99\% on the MNIST benchmark. Although standard parameter-matched MLPs currently retain a hardware-level computational speed advantage for pure computer vision tasks, Parallel BesselKAN provides a mathematically robust, parameter-efficient framework that stabilizes mixed-basis networks, paving the way for scalable spatial adaptations such as Convolutional KANs.}

\keywords{Multi-Layer Perceptron, Kolmogorov-Arnold Network, Activation Functions, Neural Network}

%%\pacs[JEL Classification]{D8, H51}

%%\pacs[MSC Classification]{35A01, 65L10, 65L12, 65L20, 65L70}

\maketitle

\section{Introduction}\label{sec1}

One of the fundamental blocks of a neural network, more specifically the multilayer percepton, is a special mathematical object called the activation functions. In the traditional neural network composed of multiple layers and each layer itself composed of nodes where nodes in two consecutive layers are interconnected by edges, the activation functions sit on those nodes that provides non-linearity to the features being transmitted throughout the neural network architecture. Usually, these activation functions are fixed, and often comes with parameters that maybe trainable depending on the functionality of the neural network architecture. Researchers and scholars have explored hundreds of activation functions for various kinds of neural network. The most popular being the ReLU, Sigmoid, Tanh and rest being a variant of other well-known activations, each trying to compete with one another in regards to computational cost \& speed as well as accuracy on the dataset that the models equipped with these activation functions were tested on.

However, over the last few years, there has been a paradigm shift in how we view about these activation functions, with the recent emergence of Kolmogorov Arnold Network \cite{liukan2025}, which was since extended into a broad family of corresponding architectures \citep{somvanshi2024kan}, a novel neural network architecture where instead of placing the activation functions on the nodes of a neural networks, it is placed on the edges connected to a node where the addition of the output of those functions take place. This architectural innovation shows promising potential for model interpretability and accuracy. First proposed in 2023 in their paper, Liu et al \citep{liukan2025} stated that the KAN architecture takes the advantage of the Kolmogorov Arnold Representation Theorem, which lends KAN a leverage in faster neural scaling laws, and local support. Most significant of them all, KAN provides researchers to inject inductive biases and collaborate with the model to discover laws, which does away with the black box problem that traditional MLPs come with. However, deployment of these KAN models is not without its challenges: because of the inherent tension between the expressive capacity of the chosen basis functions and the computational cost required to train them. So, optimizing these basis representations becomes crucial as model’s complexity grows. Finding out the right function basis that maximizes feature extraction without adding prohibitive computational overhead is essential for scaling KANs to broader, more complex real-world applications.

In this paper \citep{liukan2025}, the Liu et al. used the B-Spline family of functions as their default basis to parametrize the functions on the edges. While B-Splines have shown to excel at capturing localized, piecewise continuous patterns, Liu et al. pointed out some limitations of using B-Splines, that being its disadvantages relating to its curse of dimensionality (COD), that adds computational cost by introducing grids, being only defined on a bounded region and most importantly, contributing to the slow training speed in KANs, owing to the fact that different spline activation functions cannot leverage batch computation as efficiently as MLPs with shared fixed activations. Also, in the research field of image task recognition, like in the study done by Yu et al. (preprint) \citep{yu2024kanmlp}, they have made a raw comparison between KAN and MLP based neural network and have found that KAN models performed poorly in computer vision tasks than MLP models. The authors of that paper stated that KAN’s advantage lies within their B-Splines basis functions that made it perform better on symbolic formula representation than MLPs, which was the only kind of task that the authors have seen KAN outperformed MLP through their ablation studies. Even then, they demonstrated that MLP with B-Spline as its activation also performed just as well as KAN based models, if not better.

In light of these limitations, researchers found that the default KAN is insufficient when it comes to their performance on image-based task. There are certain studies that demonstrated that when KAN is hybridized with a different architecture like in the study done by Ullah et al. \citep{ullahkanvit} in which the authors mention about developing a KAN-ViT hybrid, by replacing traditional MLPs in the transformers \citep{vaswani2017} of Vision transformers (ViT) \citep{dosovitskiy2021} with KAN. This were tested against the traditional MLP-ViT and what they’ve found is that KAN-ViT didn’t just performed better in image based task compared to MLP-ViT, they also mitigated the problem of “catastrophic forgetting” \citep{mccloskey1989, french1999, kirkpatrick2017, parisi2019}. However, there are also researches done to overcome the limitations of B-Splines in KAN that showed promising result in image-based task, by exploring alternative mathematical families of basis functions that replaces the default B-Splines. For example, Radial Basis Functions were introduced in RBF-KAN (Chao, Liu et al. \citep{rbfkan2026}), and Chebyshev Polynomials were utilized in Cheby-KAN (Mahmoud, Pester et al. \citep{chebykan2025}). Pushing this evolution further, recent state-of-the-art architectures have demonstrated that a hybrid approach i.e. mixing two or more families of basis functions yields superior results by combining their unique mathematical inductive biases. Notable examples of this include BSRBF-KAN (Thang Ta \citep{bsrbfkan2025}) and the Fully-Connected KAN (FC-KAN) (Thang Ta, Thai et al. \citep{fckan2026}). These hybrid models established that fusing different basis families effectively addresses the representational blind spots of using a single function type. Moreover, our proposed family of basis functions of our interest - the Bessel functions have also appeared in the KAN literature very recently, but only as a monolithic per-edge replacements for B-Spline functions. In the recent paper published by Yang et al. \citep{yang2026modifiedkan}, the authors made that replacement as an ablation model in a communications-classification study, while in another recent paper published by Poole et al. \citep{poole2026pkkan}, the replacement was made as  one candidate basis in a per-edge model-compression framework.

Our studies have shown that KAN does show promises for cutting edge results, simply because of the fact that KAN, unlike MLP, uses less parameter but with better accuracy, better interpretability, reduction in catastrophic forgetting, all of which is possible because of Kolmogorov Arnold Representation Theorem. We also believe, based on our studies on the KANs recent performance on image based task, that KAN might prove to be as useful as MLPs, but with less dependence on memory allocation. So, taking inspiration from the success of these hybrid and alternative architectures, our research introduces Bessel functions into the KAN framework in a hybrid fusion setting for the first time. While previous works have wholesale replaced B-Splines with Bessel functions \cite{yang2026modifiedkan, poole2026pkkan}, to the best of our knowledge no existing architecture have combined a Bessel branch with a spline branch through a parallel pipeline and a parameter-free quadratic interaction and exploiting their complementary inductive bias. As Bessel functions are naturally suited for capturing structural and radial wave patterns, our preliminary studies indicated that relying on high-order Bessel functions introduces severe computational bottlenecks and numerical instability. Hence, in order to resolve this, we propose the Parallel BesselKAN, a novel architecture that hybridizes traditional B-Splines with low-order Bessel functions via a parallel pipeline approach. By utilizing a parameter-free quadratic interaction fusion, our model functions as a highly efficient internal ensemble. This approach allows us to leverage the distinct mathematical strengths of both basis families simultaneously, achieving highly competitive accuracy within the KAN family while drastically reducing the network's parameter footprint and memory allocation compared to existing baseline models.

\section{Background Theory}\label{2}

In this section we discuss some requisite theories that is essential for discussing the mathematical
basis of the KAN architecture and then the family of functions that we used for our Hybrid model:

\subsection{ Kolmogorov-Arnold Network}\label{2.1}

This is a novel kind of neural network architecture introduced by Liu et al. \citep{liukan2025} in their paper published in
2024 that is seen as a paradigm shift from the traditional Multi Layer Perceptron (MLP) in that they
unlike in the traditional MLPs where we had fixed activation function on the nodes and learnable
weights on the edges of a neural network, KAN almost shifts it’s upside down, with the activation
being on the edge and learnable, while at the nodes, we have addition of all the activation functions
that is coming into the edges that are connected to it.

\begin{figure}[h]
\centering
\includegraphics[width=0.9\textwidth]{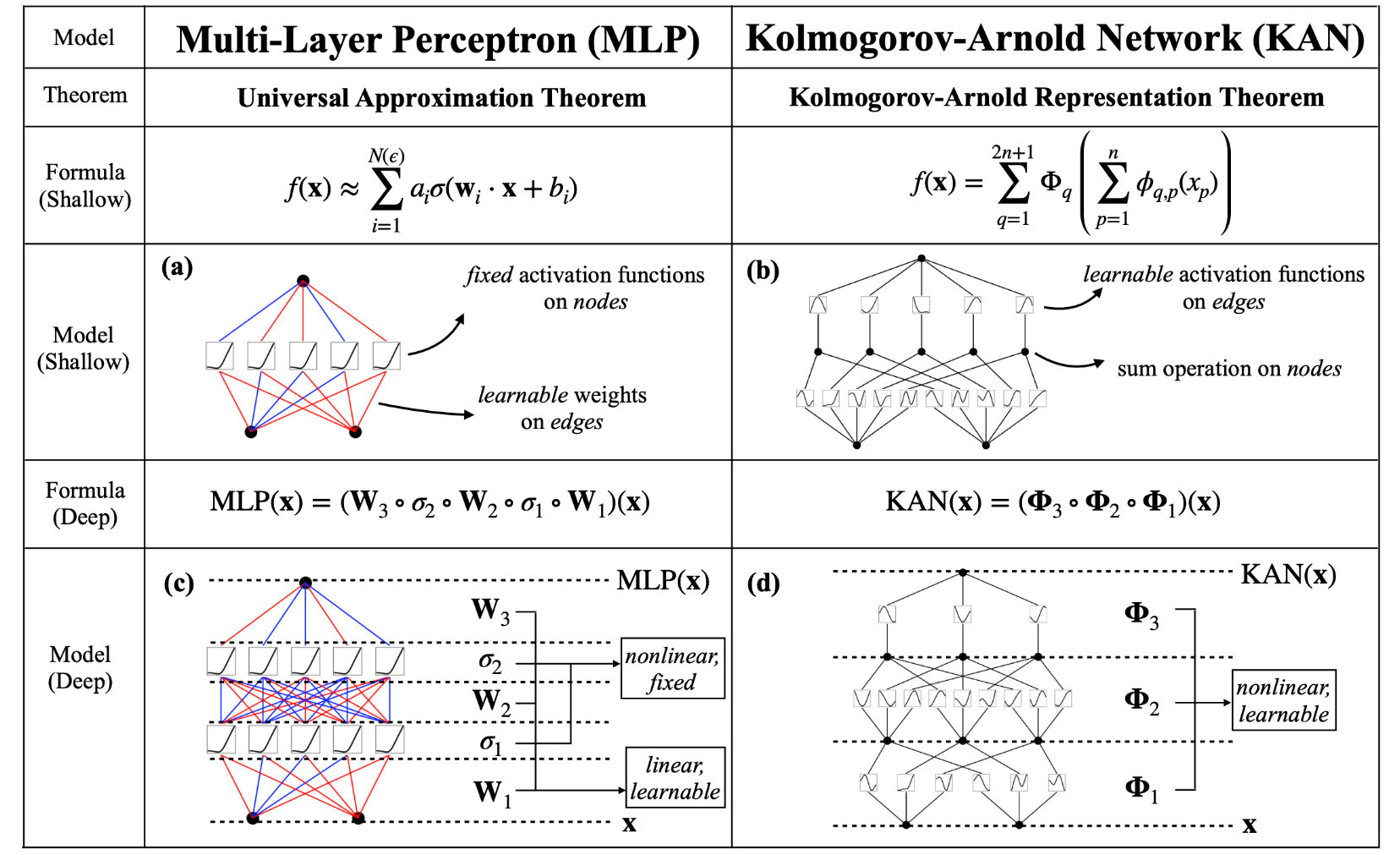}
\caption{Difference between MLP and KAN. Figure adapted from Liu et al. \citep{liukan2025} under a Creative Commons Attribution 4.0 International License.}\label{fig1.0}
\end{figure}

The main driving mathematical theorem behind KAN is the Kolmogorov Arnold Theorem \citep{kolmogorov1957, arnold1957}, which states
that states that any multivariate continuous function can be represented as a finite composition of
univariate functions and addition. Liu et al argue that if a function admits a smooth KA
representation, KANs can decompose high-dimensional learning into learning 1D functions, thereby
beating the curse of dimensionality (Liu et al. \citep{liukan2025}). This is in contrast to the Universal
Approximation Theorem \citep{cybenko1989, hornik1989, leshno1993} that serves as an inspiration for the MLPs, which guarantees that MLPs can
approximate any function, but provides no bound on how the required network size scales with
error \citep{barron1993}. In practice, MLPs suffer from the curse of dimensionality (COD), i.e. the required parameters can
grow exponentially with input dimension.

The authors went for KAN because they believe the Kolmogorov-Arnold theorem provides a more
structurally efficient path to representing real-world scientific functions than the generic Universal
Approximation Theorem that is mathematical basis behind the traditional MLPs. By replacing fixed
node activations with learnable edge splines, they created a network that is simultaneously more
parameter-efficient (better scaling laws), more accurate (especially for structured scientific data),
and fundamentally interpretable (enabling human-AI scientific discovery).

\subsection{Kolomogorov-Arnold Representation Theorem}\label{2.kart}

The theorem states that: every scalar valued, multi-variable function defined on a closed bounded subspace of a real Euclidean space can be expressed as as a finite composition of univariate functions and addition. That is, if $f:\left[0,1\right]^n \to \mathbb{R}$ is any multivariate, scalar valued function, then, for any $\mathbf{x} = (x_1,x_2,\ldots,x_n) \in \left[0,1\right]^n$, $f$ can be representated as-
\begin{equation}
f(\mathbf{x}) = f(x_1,x_2,\ldots,x_n) = \displaystyle\sum_{q=0}^{2n}\Phi_q\left(\sum_{p=1}^n \phi_{q,p}(x_p)\right)
\end{equation} 
where $\phi_{q,p} : \left[0,1\right] \to \mathbb{R}$ and $\Phi_q : \mathbb{R} \to \mathbb{R}$. The authors argue that if a function admits a smooth KA representation, KANs can decompose high-
dimensional learning into learning 1D functions, thereby beating the curse of dimensionality. Classical extensions by Sprecher \citep{sprecher1965} and Lorentz \citep{lorentz1966} further refined this representational capability.
In the original paper, the authors chose B-Spline functions as the family of basis that they used to
parametrized the learnable univariate functions on each edges.

\subsection{B-Spline Functions}\label{2.B}

B-splines (Basis splines) are piecewise polynomial functions that are used to construct smooth
curves and surfaces \citep{deboor1978}. These curves called the B-Spline curves are made by joining polynomial
segments of degree $k$ across a grid of points called knots. In the original paper, they typically used
polynomial degree of 3. Some of the key features of such curves is that they have local support, in
the sense that each B-Spline basis function that is used to generate the curves is non-zero only on very limited interval (between adjacent knots). This means changing one coefficient only affects a local
region of the curve, not the entire function. The curve also maintains smooth features, because at
the knots where polynomial pieces meet, the function is guaranteed to be continuous and have
continuous derivatives up to order $k-1$. For cubic splines ($k=3$), which was used in the original
paper, this means the curve is twice continuously differentiable ($\text{C}^2$)

What happens inside KANs is that each activation function on the edge is represented as a linear
combination of B-spline basis functions $B_i(x)$ with learnable coefficients $c_i$:
\begin{equation}
\text{spline}(x) = \sum_i c_i B_i(x)
\end{equation}
where the network learns the coefficients $c_i$ through gradient descent.

In summary, B-splines are used in KAN because they provide a learnable, smooth, locally adjustable,
and resolution-adaptive way to represent the univariate functions required by the Kolmogorov-
Arnold theorem. They give KANs their signature capabilities: high accuracy via grid extension,
resistance to catastrophic forgetting via locality, and the ability to decompose high-dimensional
problems without falling prey to the curse of dimensionality.

\subsection{Bessel Functions}\label{2.Bes}

Bessel functions are a special kind of functions that are solutions to a particular type of
ordinary differential equation, called the Bessel’s equation \citep{watson1922}:

\begin{equation}
x^2\frac{d^2y}{dx^2}+x\frac{dy}{dx}+\displaystyle\left(x^2-\alpha^2\right)y=0
\end{equation}

Where $\alpha$  is the order of the equation which is usually a complex number. These equations appear in
problems involving wave motion, heat conduction and other physical phenomenon with
circular/cylindrical symmetry. In general, there are two kinds of Bessel functions: The Bessel
functions of the $\text{1}^{st}$ kind given by
\begin{equation}
J_{\alpha}(x) = \displaystyle\sum_{m=0}^{\infty}\frac{(-1)^m}{m!\Gamma(m+\alpha+1)}\displaystyle\left(\frac{x}{2}\right)^{2m+\alpha}
\end{equation}
When $\alpha$ is either an integer or positive, these Bessel functions have finite values at the origin,
whereas if $\alpha$ is a negative non-integer, these functions are divergent at the origin. Also, for non-
integer $\alpha$, $J_{\alpha}(x)$ and $J_{-\alpha}(x)$ are two linearly independent solutions to the Bessel’s equation and so
they form a solution space for the Bessel’s equation. However if $\alpha$ is an integer, then $\alpha$, $J_{\alpha}(x)$ and $J_{-\alpha}(x)$ wouldn't be linearly independent because for any integer $n$, $J_{-n}(x) = (-1)^nJ_n(x)$. Then in
this case the other linearly independent solution is found through the Bessel functions of $\text{2}^{nd}$ kind. This type of Bessel functions are solutions of the Bessel differential equation that have a singularity
at the origin and are multivalued. Denoted by $Y_{\alpha}(x)$, where $\alpha$ is a non-integer, its formula is given by
\begin{equation}
Y_{\alpha}(x)=\displaystyle\frac{J_{\alpha}(x)\cos(\alpha\pi)-J_{-\alpha}(x)}{\sin(\alpha x)}
\end{equation}
When dealing with integer order $n$, the Bessel function of $\text{2}^{nd}$ kind is defined as $Y_n(x)=\displaystyle\lim_{a\to n}Y_{\alpha}(x)$

\section{Formulation of the Hybrid Model}

As stated before, there were many good reasons why the Liu et al \citep{liukan2025}. chose B-Splines as their
baseline family of basis functions. Because KAN leverages the KART to theoretically decompose a
multivariate function into its additive super-composition of univariate functions, B-Splines provide a
flexible, differentiable way to represent any smooth univariate function. In this case, each “weights”
in KAN is replaced by a univariate B-Spline curves. Because KANs reduce the high-dimensional
learning problem to learning multiple univariate functions, B-splines can approximate these
univariate components with high precision, which Liu et al. have found to be more
efficient than MLPs which try to approximate the same high-dimensional functions with commonly
known activations such as ReLU. Another unique advantage that the paper has noted about B-Splines is
their ability to “refine the grid” without requiring any retraining from scratch. The original paper
introduced a grid extension technique which starts off with a coarse grid (few intervals for each B-
Splines) and then refining those grids by fitting a new fine-grained spline to the old coarse one via
least squares. This has shown to improve accuracy as grids become finer, something MLPs cannot do
by simply adjusting a fixed activation function. Other advantages include the local support that B-
Splines have which lets KANs avoid catastrophic forgetting during continual learning and B-splines
being smooth and fully differentiable, which allows KANs to be trained end-to-end with
backpropagation.

However, B-Splines is not without its limitation, as the authors who used these basis family
have themselves reported. One of them being the curse of dimensionality (COD) of B-splines themselves.
Although with the help of B-Splines and with how the architecture of KAN is modelled, by combining
and leveraging the strength of B-Splines and MLPs, while avoiding their weaknesses, raw splines have
serious COD problem because of their inability to exploit their compositional structures, i.e. if one
tries to use B-splines to approximate a high-dimensional function directly in the full input space, the
required grid points grow exponentially with dimension. The authors have also noted is that KANs
appear "hopelessly expensive" at first glance because each MLP weight parameter becomes a KAN
spline function (Section 2.2 from Liu et al. \citep{liukan2025}). A KAN with depth $L$, width $N$ and $G$ grid intervals have $O(N^2LG)$ parameters, whereas an MLP only needs $O(N^2L)$ parameters. While the authors argue that KANs compensate by
requiring much smaller widths, the raw parameter count is inherently higher. B-Splines also have
issue with them being defined on a bound region, because during training, activation values can
evolve outside that region. The authors proposed a remedy by by updating grids on the fly according
to input activations, but this adds implementation complexity and is described as a non-trivial
optimization challenge. The biggest issue with B-Splines is that B-Spline causes a bottleneck to the
KANs making it typically slower than traditional MLPs. The bottleneck is due to the fact that different
spline activation functions cannot leverage batch computation (large data through the same
function) as efficiently as MLPs with shared fixed activations. They classify this as an engineering
problem rather than a fundamental limitation, but it remains the "biggest bottleneck" of the
architecture.

In the domain of image tasks, there are comparative studies being made with KANs against
other traditional MLPs, one of the significant ones includes the study done by Yu \& Wang \cite{yu2024kanmlp} where they
made a fair comparison between KAN based models with MLP based models on various tasks which
includes computer vision. They have found that under the same numbers of parameters or FLOPs.
KAN performed quite poorly compared to MLP on computer vision task (77.88\% vs 85.88\% test
accuracy). The computer vision task involved testing vanilla KAN and MLP (with GELU \citep{hendrycks2016}, RELU,
normalization) on 8 image datasets (such as MNIST \citep{lecun1998}, CIFAR-10/100 \citep{krizhevsky2009} variants etc.). They also found that
KAN had issue with forgetting more severe than that of MLP in a standard class-incremental learning,
which they claimed to have contradicted the claim made in the original KAN paper \citep{liukan2025}. Overall, the authors Yu \& Wang
claimed that while KANs have promises, it might not be a universal MLP replacement, because their
edge was niche and computer vision tasks favor MLPs over KAN, at the time the authors conducted
their experiments. However, this doesn’t mean that KAN is doomed to fail in computer vision task, as
we have noticed some glaring flaws in the author’s paper. The authors used vanilla KAN with basic
spline-based layers against an optimized MLPs. It didn’t test KAN hybrids or extensions designed for
computer vision which emerged after 2024. Secondly, the original KAN designs in the paper of Liu et
al. \citep{liukan2025} was made for small scale or science tasks, and they acknowledge KAN’s
scalability for computer vision as a future work. Furthermore, we have recent research showing KANs
excelling in computer vision task when adapted (like integrating KAN with Vision transformers \citep{ullahkanvit} or
adopting an alternate family of basis functions in KAN), often matching or beating MLPs in specific
scenarios like continual learning or efficiency.

In the paper published by Ullah \& Kim \citep{ullahkanvit} in 2025, they replaced the MLPs in ViT with KANs and what
they’ve found is that not only KAN-ViTs achieved comparable and even better accuracy on computer
vision tasks than MLP-ViT, but also significantly mitigated catastrophic forgetting, outperforming
MLP-ViT in retaining information and adapting to new task. While there is future scope where we
could do such replacement on other neural network models that similarly perform better on, our
main interest lies in the implementation of alternate families of basis function in KANs that helps the
KANs to perform better in certain task that the vanilla KANs normally fails at or at least performs as
better as MLPs. There are several studies that explores alternate KAN models with different basis
family of functions and also studies where they incorporated new family of basis function with the B-
Splines.

Some of these studies explores alternate KAN models such as:
\begin{enumerate}
\item \textbf{Cheby KAN \citep{chebykan2025}:} This replaces B-splines with Chebyshev polynomials to eliminate the costly spline-grid
updates, leveraging orthogonality and fast convergence for stabler training. It runs 14 times faster than B-
spline KAN-SchNet while achieving lower MAE on quantum-chemistry regression and stronger
generalization on MNIST/CIFAR-10.

\item \textbf{Fourier KAN \citep{fourierkan2026}:} This decomposes image features via FFT into amplitude (style) and phase (content),
then uses KAN to break these into learnable univariate basis functions for multi-level recombination.
This yields large gains in unknown-domain object-detection generalization (Diverse Weather \& Real-
to-Art benchmarks) without sacrificing real-time inference (110 FPS).

\item \textbf{Free-RBF KAN (preprint \citep{freerbfkan2026}):} This cuts B-spline overhead by adopting adaptive radial basis functions with
trainable centroids and smoothness, bypassing De Boor’s recursion entirely. It matches or exceeds B-
spline KAN accuracy on multiscale regression and physics-informed tasks (PINNs \citep{raissi2019}, DeepONet \citep{lu2021}) while
using fewer parameters and training faster.

\item \textbf{Rational KAN \citep{rkan2026}:} This introduces rational basis functions through Padé approximants and mapped
Jacobi polynomials to better capture asymptotic behavior and singularities that standard polynomials
miss. It outperforms alternatives on regression tasks with sharp peaks and delivers competitive
results on MNIST and physics-informed ODE/PDE solving.

\item \textbf{RBF KAN \citep{rbfkan2026}:} This model swaps B-splines for Gaussian radial basis functions to remove recursive grid
computation and improve interpretability via localized activations. It reduces parameters by 33–65\%
and accelerates training/inference by 65–99\% compared with B-spline KAN, while maintaining
comparable accuracy on function fitting and classification benchmarks.

\item \textbf{Wav-KAN (preprint \citep{wavkan2024}):} This model substitutes B-splines with wavelet transforms to enable multiresolution
analysis, capturing both high-frequency details and low-frequency trends while avoiding overfitting
to noise. It trains faster than Spl-KAN with fewer parameters and achieves higher test accuracy on
MNIST using wavelets such as DOG and Mexican hat.
\end{enumerate}
Each of these alternative models made replacement for B-Splines in KAN so that each
performs a specific task better than the baseline KAN, not necessarily these models are meant as a
total replacement of the original KAN models.

Taking inspiration from these, we explore the family of Bessel functions, which despite being well-studied in applied mathematics and appearing only as isolated spline replacements in prior KAN studies \cite{yang2026modifiedkan, poole2026pkkan}, has never been employed within a hybrid or fused-basis KAN architecture. As mentioned in the
previous section, these functions are the solutions to the $\text{2}^{nd}$ order differential equation called the
Bessel’s equation. These equations naturally appear in physical phenomenon that involves cylindrical
and circular symmetry. There are several properties why we think Bessel Functions make for a good
compliment for the B-Spline functions in KAN. The first property being that Bessel functions have a
naturally oscillating and decaying behavior, which makes it useful for capturing textures, edges and
radial patterns in images. The second properties being its radial symmetry – because these functions
are naturally defined in polar/cylindrical co-ordinates, it aligns well with circular or rotationally
features in images (an inductive bias previously successfully exploited in CNNs via Harmonic Networks \citep{worrall2017}). These functions also form a complete orthogonal basis in certain weighted and
can efficiently approximate functions with oscillatory components. Other important properties
include its rich theoretical background and being infinitely differentiable which is essential for
gradient based training in KAN. Furthermore, our motivation is empirically supported by recent evidence that spline-KANs approximate oscillatory special functions, including Bessel functions themselves, with near-perfect fidelity, substantially outperforming MLPs on such targets \cite{ramonpedrero2026kanmlp}, indicating that oscillatory structure is precisely the regime in which KAN edge functions excel.

However, there are some potential limitations of Bessel functions that would limit a full-on
replacement for B-Splines in KAN. The first being its lack of localization. Because Bessel functions of $1^{st}$ kind are globally supported and oscillatory with slow decaying amplitude, it becomes difficult to
represent sharp localized features such as edges, corner and fine textures. This initially suggest us
that we may need higher order Bessel functions to approximate a sharp discontinuity, leading to the
Gibbs phenomenon \citep{hewitt1979}. Another issue with Bessel functions is that they are highly non-linear and
oscillatory in nature. What this means is that Bessel functions have oscillating gradient that
results in many local minima and slower convergence, which would create several optimization
challenges. This would make it less efficient on non-radial and non-wavy image data.

Due to this, we’re inspired to take a hybrid approach with Bessel function where we mix
Bessel functions with the existing B-Splines in KAN. The development of our hybrid architecture
began by exploring a direct additive formulation, inspired by recent successes in mixed-basis models
such as BSRBF-KAN, and analogous in spirit to the pure Bessel-replacement KAN of Yang et al. \cite{yang2026modifiedkan}, but with both families present on every edge. In this initial design, the activation function on each individual network edge was
defined as the linear combination of a standard B-Spline basis and a Bessel function basis.
Mathematically, the activation on a given edge was formulated as
\begin{equation}
\phi(x) = \phi_{spline}(x)+\psi_{bessel}(x)
\end{equation}
The underlying hypothesis was that superimposing the localized, piecewise nature of B-Splines with
the structural, wave-like properties of high-order Bessel functions within the same functional space
would yield a richer feature representation.

While theoretically appealing, empirical evaluations of this direct additive approach exposed
significant structural limitations. Placing multiple basis families on every single edge caused an
immediate and severe parameter explosion. Furthermore, evaluating high-order Bessel functions
across the entire network resulted in a computational complexity that rendered comprehensive
ablation studies practically unfeasible. To mitigate this bottleneck, an intermediate architectural
correction was implemented: the Bessel function expansion was strictly truncated to Order-2,
utilizing only the low-order functions $J_0(x)$ and $J_1(x)$. Although this constraint successfully curtailed
the parameter bloat and restored computational tractability, it introduced a detrimental trade-off.
The reduced expressive capacity of the Order-2 restriction penalized the network's learning
dynamics, leading to a demonstrable decrease in benchmark accuracy compared to the initial
baseline experiments.

So, taking further inspiration from FC-KAN, we attempt to resolve the tension between expressive capacity and computational efficiency, by fundamentally restructuring the architecture from a per-edge additive model to a dual-pipeline
parallel framework. Instead of forcing both function families to share the same edge, the network is
split into two distinct, parallel processing pipelines: a dedicated B-Spline branch and an Order-2
Bessel branch. To ensure the overall parameter footprint remains highly efficient, the hidden
dimensions of these parallel pipelines are compressed (e.g., utilizing a 32 + 32 configuration). The
final feature representations from these parallel branches are fused using a parameter-free element-
wise interaction, defined as
\begin{equation}
O_{final} = O_{spline}+O_{bessel}+(O_{spline}\odot O_{bessel})
\end{equation}
To rigorously evaluate this quadratic interaction mechanism, we have also tested a simple additive fusion $O_{final} = O_{spline}+O_{bessel}$ and a concatenation fusion as comparative baselines, which would ensure a comprehensive ablation of the parallel pipeline's feature combination strategies. This Parallel
BesselKAN-Interact architecture effectively recovers the non-linear expressivity lost during the Order-
2 truncation without introducing additional learnable weights. By acting as a highly efficient internal
ensemble, this parallelization bypasses the computational bottlenecks of the initial additive approach
while securing superior predictive performance.

Now, in relation to the prior work on Bessel-based KANs, both Yang et al. \citep{yang2026modifiedkan} and Poole et al. \citep{poole2026pkkan} employed Bessel functions within KANs, but in a fundamentally different regime. Yang et al. evaluated a pure Bessel-replacement in KAN as a reference model for modulation classification, which has shown to under-perform in their distribution-aware design and it is consistent with our finding that a full replacement of B-Spline with Bessel functions sacrifices the spline branch’s local adaptivity. Poole et al. selected a single compact basis per edge for model compression, which is an objective approach that’s completely opposite to ours because none of their edges in their architecture could express cross-basis products, while our parameter-free quadratic fusion contributes additional dimensions of functional expressivity (See Theorem \ref{Thm4.2.1}) without learnable parameters. To our knowledge, Parallel BesselKAN is the first architecture to hybridize B-Spline basis with the basis of Bessel functions and the first to provide theoretical guarantees, as laid out in the next section exploring the mathematics behind our fused-basis KAN. Furthermore, where both previously published works operate on one-dimensional signal data, we target image classification, the setting in which KANs are known to be weakest \citep{yu2024kanmlp}

\section{Mathematical Structure}

In this section, we propose certain analysis, theorems and proposition that serves as a mathematical backbone of our model, giving a mathematical justification of how and why our model works.

\subsection{Conditioning and Numerical Characterization of the Interaction Space}

While subsequent propositions establish the theoretical linear independence of the mixed basis generated by piecewise polynomials and analytic Bessel functions, the empirical realization of this functional space over a compact domain $\Omega = [-1, 1]$ introduces structural ill-conditioning. To evaluate the stability boundaries of the Parallel BesselKAN, we systematically analyzed the design matrix $\Theta \in \mathbb{R}^{N \times M}$ and its associated Gram matrix $G = \Theta^\top \Theta$.

Across all evaluated basis configurations on a 500-point uniform collocation grid, the condition number of the explicit Gram matrix strictly obeys the theoretical squaring identity $\kappa(G) = \kappa(\Theta)^2$. For our optimal baseline $7 \times 7$ hybrid space ($s=5, b=2$), numerical evaluation yields a design condition number of $\kappa(\Theta) \approx 1.5 \times 10^4$. Continuous adaptive quadrature ($\epsilon_{abs} = 10^{-14}$) confirms this near-dependence ($\sigma_{min} \approx 2.009 \times 10^{-6}$), demonstrating that the ill-conditioning is a structural geometric property of the continuous functional space rather than a discrete sampling artifact. Furthermore, column unit-$L_2$ normalization leaves the condition number virtually unchanged (altering $\kappa(\Theta)$ by a negligible factor of 0.916), proving that the redundancy stems from geometric shape overlap rather than parameter scale disparity. Consequently, we report the design matrix condition number $\kappa(\Theta)$ as the primary metric for numerical stability.

\begin{table}[h]
\caption{Empirical Conditioning and Effective Rank Across Basis Configurations}
\label{tab:conditioning}
\centering
\begin{tabular}{@{}lcccc@{}}
\toprule
Config & $\kappa(\Theta)$ & $\kappa(G)$ & Rank($\Theta$) & Rank($G$) \\
\midrule
Spline only (n=5) & $5.50 \times 10^0$ & $3.02 \times 10^1$ & 5/5 & 5/5 \\
Bessel only ($\nu=0$--$3$) & $2.89 \times 10^2$ & $8.39 \times 10^4$ & 4/4 & 4/4 \\
Hybrid ($s=5, b=4$) & $1.46 \times 10^7$ & $2.13 \times 10^{14}$ & 9/9 & 9/9 \\
Hybrid ($s=5, b=6$) & $1.73 \times 10^8$ & $3.30 \times 10^{16}$ & 11/11 & \textbf{10/11} \\
Bessel horror ($\nu=0$--$7$) & $6.09 \times 10^7$ & $3.67 \times 10^{15}$ & 8/8 & \textbf{7/8} \\
\bottomrule
\end{tabular}
\end{table}

Spectral analysis of the design matrix $\Theta$ isolates the primary near-null direction within the weakest singular vector. The vector exhibits a symmetric coefficient distribution:
\begin{equation}
\mathbf{v}_{\text{null}} = [\underbrace{-0.344}_{\text{bspl1}}, \underbrace{-0.424}_{\text{bspl2}}, \underbrace{-0.455}_{\text{bspl3}}, \underbrace{-0.424}_{\text{bspl4}}, \underbrace{-0.344}_{\text{bspl5}}, \underbrace{+0.443}_{\text{bessel\_j0}}, \underbrace{0.000}_{\text{bessel\_j1}}]^\top
\end{equation}

This mathematically quantifies the redundancy: the fundamental Bessel function $J_0(x)$ which is a smooth, even function on $[-1, 1]$, is almost entirely subsumed by the symmetric, even combination of the local B-spline partition of unity. Higher-order Bessel functions contribute minimally to this specific null space.

\begin{figure}[htbp]
\centering
\includegraphics[width=\textwidth]{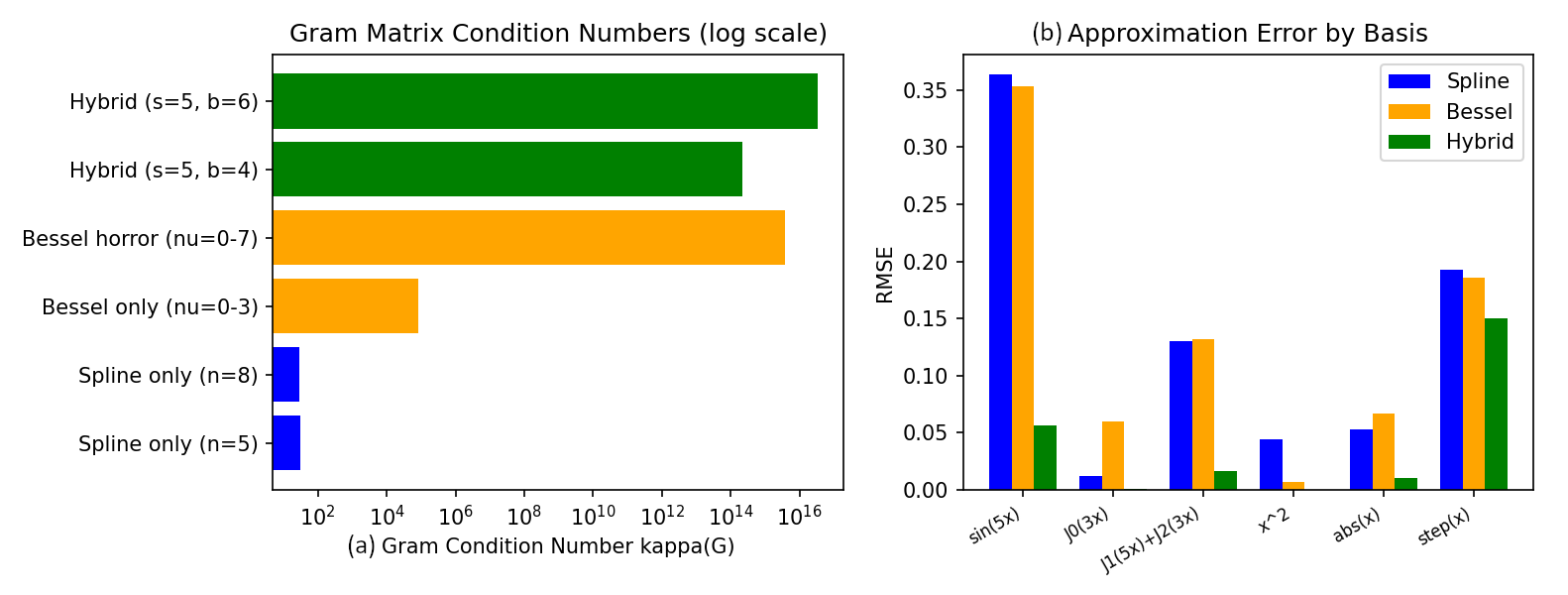}
\caption{Spectral characterization and approximation bounds of the hybrid architecture 1: (\textbf{a}) Gram matrix condition numbers scale exponentially in hybrid configurations. (\textbf{b}) The hybrid model collapses reconstruction RMSE on complex oscillatory targets like $\sin(5x)$.}
\label{fig:theory_plots_a_b}
\end{figure}

\begin{figure}[htbp]
\centering
\includegraphics[width=0.5\textwidth]{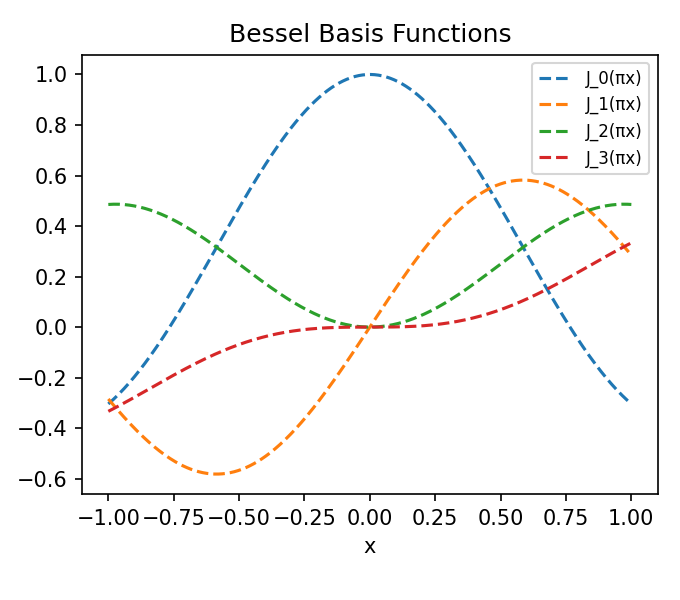}
\caption{Spectral characterization and approximation bounds of the hybrid architecture 2: (\textbf{c}) The symmetric geometric overlap between the B-spline partition and low-order Bessel functions over $[-1, 1]$ drives the primary near-null space.}
\label{fig:theory_plots_c}
\end{figure}

Evaluating the hybrid basis demonstrates a severe trade-off between expressive capacity and numerical stability. Expanding the Bessel truncation order $q$ beyond $q=2$ yields negligible improvements in mean reconstruction error. For example, on the smooth oscillatory target $\sin(5x)$, the hybrid architecture ($b=2$) collapses the root mean square error (RMSE) to 0.0567, compared to 0.3632 for pure splines and 0.3535 for pure Bessel bases. However, expanding to $b=6$ causes the design matrix condition number $\kappa(\Theta)$ to grow exponentially to $1.734 \times 10^8$. Crucially, for $b \ge 6$, explicitly forming the normal matrix $G = \Theta^\top \Theta$ pushes $\kappa(G)$ to $\approx 3.3 \times 10^{16}$, driving the minimum singular values below machine precision. This causes a catastrophic artificial rank drop (from 11/11 in $\Theta$ to 10/11 in $G$) and severe numerical divergence during training.

\begin{figure}[htbp]
\centering
\includegraphics[width=0.6\textwidth]{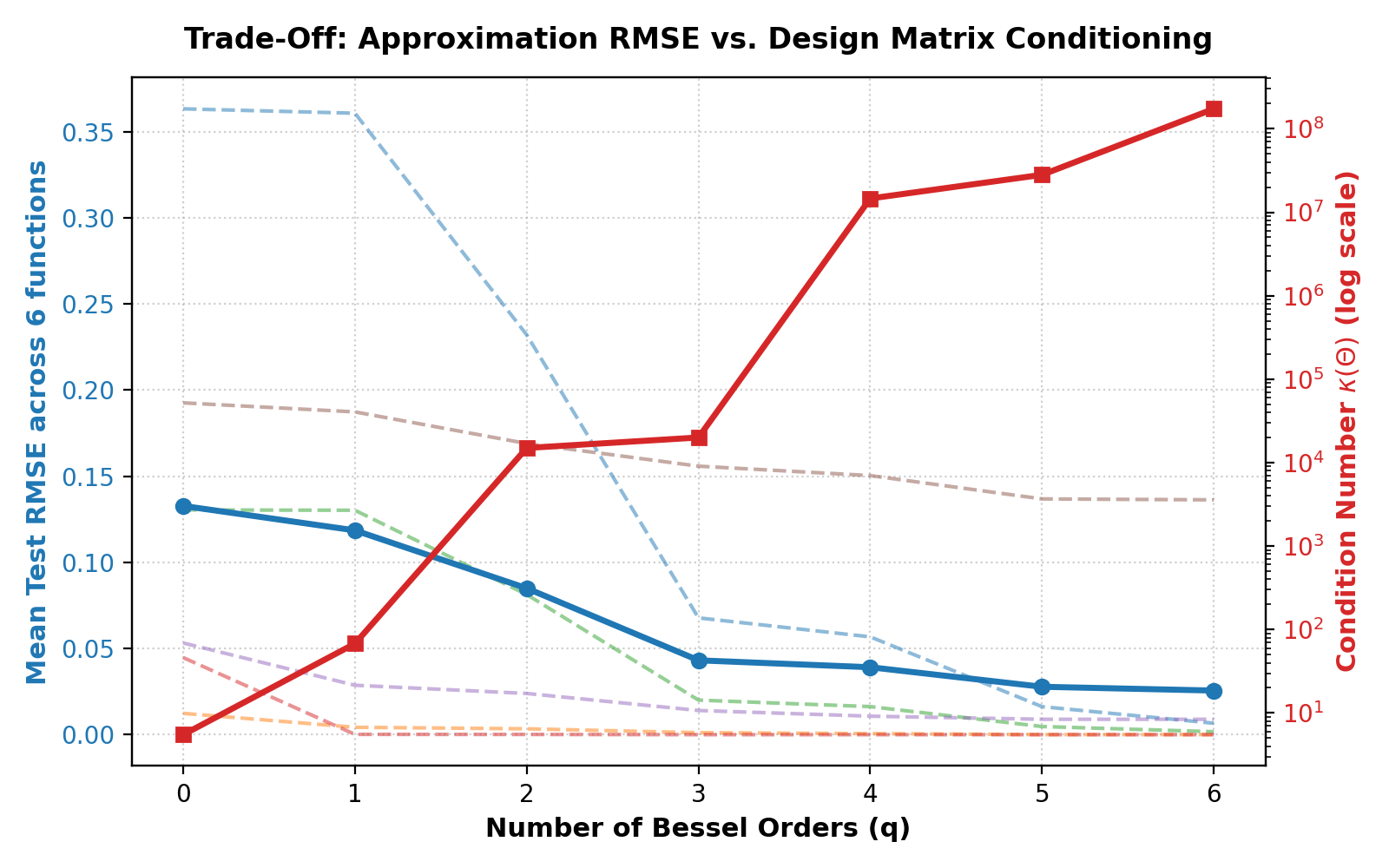}
\caption{Expressive capacity versus numerical stability trade-off. Expanding the Bessel truncation order $q$ beyond $q=2$ yields negligible improvement in mean reconstruction error (RMSE) while causing the design matrix condition number $\kappa(\Theta)$ to grow exponentially, eventually exceeding double-precision limits.}
\label{fig:tradeoff}
\end{figure}

Based on these numerical characterizations, we adopt $b=2$ ($q=2$) as the default hyperparameter cap for the Parallel BesselKAN. To maintain gradient stability during backpropagation, the model mandates solving the linear combination directly on the design matrix $\Theta$ via direct QR factorization or Singular Value Decomposition (SVD), strictly bypassing the normal equations to avoid squaring the condition number.

\subsection{Dimension Analysis of the Quadratic Fusion}

Since we settled for a parallel hybrid approach for our hybrid Bessel- KAN models, where the output for each KAN pipeline is combined as a quadratic fusion, our goal in this section is to mathematically justify this fusion achieves higher accuracy without adding learnable parameters.

Let $V_1 = \mathrm{span}\{\phi_1,\dots,\phi_{d_1}\}$ be the degree-$p$
B-spline basis on a clamped knot vector over $[a,b]$ with $a>0$, and
$V_2 = \mathrm{span}\{\psi_1,\dots,\psi_{d_2}\}$ with
$\psi_j(x) = J_{\nu_j}(\lambda_j x)$, where $\nu_j \ge 0$ and the
frequencies $\lambda_j > 0$ are pairwise distinct. The pairwise
distinctness excludes degeneracies arising from Bessel recurrence
relations at a shared frequency. The product space
$V_1 \odot V_2 = \mathrm{span}\{\phi_i\psi_j\}_{i,j}$ is spanned by the
parameter-free product features.

We first establish an independence proposition.

\begin{proposition}\label{Prop4.2.1}
Let $I\subset(0,\infty)$ be a nonempty open interval and
$P, Q_1,\dots,Q_{d_2}$ polynomials with $\deg Q_j\le p$. If
$P(x) + \displaystyle\sum_{j=1}^{d_2} Q_j(x)J_{\nu_j}(\lambda_j x) = 0$ for all
$x\in I$, then $P\equiv 0$ and all $Q_j\equiv 0$.
\end{proposition}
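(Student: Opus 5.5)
The plan is to extend the identity from $I$ to the whole half-line, and then read off a contradiction from the large-$x$ asymptotics of the Bessel functions.

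\textbf{Step 1: analytic continuation.} For $\nu\ge 0$ and $\lambda>0$ we have $J_\nu(\lambda x)=(\lambda x)^\nu\,h_\nu(\lambda x)$, where $h_\nu(z)=\sum_m \frac{(-1)^m}{m!\,\Gamma(m+\nu+1)}(z/2)^{2m}2^{-\nu}$ is entire. Since $x^\nu$ is real-analytic on $(0,\infty)$, each $J_{\nu_j}(\lambda_j x)$ is real-analytic there. So is
\[
F(x)=P(x)+\sum_{j=1}^{d_2} Q_j(x)J_{\nu_j}(\lambda_j x).
\]
The function $F$ vanishes on the open interval $I\subset(0,\infty)$. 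By the identity theorem for real-analytic functions on the connected set $(0,\infty)$, $F\equiv 0$ on all of $(0,\infty)$. This step is what lets us use behaviour at infinity, even though $I$ may be bounded.

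\textbf{Step 2: asymptotic reduction.} Suppose, for contradiction, that not all $Q_j$ vanish. Let $m=\max\{\deg Q_j : Q_j\not\equiv 0\}$, and let $c_j$ be the coefficient of $x^m$ in $Q_j$; at least one $c_j$ is nonzero. We use the standard Hankel asymptotic
\[
J_\nu(\lambda x)=\sqrt{\tfrac{2}{\pi\lambda x}}\bigl(\cos(\lambda x-\tfrac{\nu\pi}{2}-\tfrac{\pi}{4})+O(x^{-1})\bigr), \qquad x\to\infty.
\]
Multiplying $F\equiv 0$ by $x^{1/2-m}$ then gives
\[
-\,x^{1/2-m}P(x)=g(x)+O(x^{-1/2}), \qquad g(x)=\sum_j a_j\cos(\lambda_j x-\theta_j),
\]
with $a_j=c_j\sqrt{2/(\pi\lambda_j)}$ and $\theta_j=\nu_j\pi/2+\pi/4$. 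The error term collects two contributions: the lower-degree parts of the $Q_j$, and the $O(x^{-1})$ correction in the Hankel expansion. Both give $O(x^{-1/2})$ after rescaling.

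The right-hand side is bounded. The left-hand side is $x^{1/2-m}$ times a polynomial, so it behaves like $C x^{k+1/2-m}$ with $k=\deg P$ (or vanishes identically). Boundedness forces $k+1/2-m<0$ whenever $P\not\equiv 0$. In either case the left-hand side tends to $0$, and hence $g(x)\to 0$ as $x\to\infty$.

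\textbf{Step 3: the main obstacle.} We must show that a trigonometric sum with pairwise distinct positive frequencies cannot tend to $0$ unless all its coefficients vanish. Note that the $\nu_j$ may coincide; only the $\lambda_j$ are distinct, so the frequencies are distinct. We use Bohr means: for fixed $k$,
\[
\lim_{T\to\infty}\frac1T\int_0^T g(x)\cos(\lambda_k x-\theta_k)\,dx=\frac{a_k}{2},
\]
because for $\lambda_j\ne\lambda_k$ the cross terms average to $0$, and $\lambda_j+\lambda_k>0$ so the sum-frequency term also averages to $0$. On the other hand, $g\to 0$ and the cosine is bounded, so the same mean is $0$. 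Hence every $a_k=0$, so every $c_k=0$, contradicting the choice of $m$.

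Therefore all $Q_j\equiv 0$. The identity on $I$ then reduces to $P=0$ on an open interval, so $P\equiv 0$. The only delicate points are ensuring that the Hankel remainder is uniform enough to be absorbed into $O(x^{-1/2})$ after multiplying by a polynomial of degree $m$, and the averaging argument. Both are standard.
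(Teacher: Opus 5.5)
Your proof is correct and takes essentially the same route as the paper: analytic continuation to $(0,\infty)$, then Hankel asymptotics after normalizing by $x^{1/2-m}$, then the half-integer exponent to rule out a polynomial matching the oscillatory term, then an almost-periodic averaging argument that uses the distinct $\lambda_j$. The only difference is the order of the last two steps: you first force $g\to 0$ and then kill each coefficient via its Bohr mean against $\cos(\lambda_k x-\theta_k)$, whereas the paper uses the mean square $\sum_j c_j^2/(\pi\lambda_j)>0$ to get $\limsup|F|>0$ and then contradicts polynomial growth; the two are interchangeable.
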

\begin{proof}
Since all functions involved are real-analytic on $\left(0,\infty\right)$ [$\because a>0$], by the Identity Theorem\citep{krantz2002}, the identity extends to all $x>0$. Suppose, on the contrary, $Q_j \not\equiv 0$ for some $j$. Let $d_0 = \displaystyle\max_j \deg Q_j$, $S_0=\{j:\deg Q_j = d_0\}$ and let $Q_j = c_j x^{d_0} + O(x^{d_0-1})$ with $c_j\neq 0$ for all $j\in S_0$. By Hankel's asymptotic expansion \citep{dlmf}, we have $J_\nu(z) = \displaystyle\sqrt{\frac{2}{\pi z}}\cos\left(z - \frac{\nu \pi}{2} - \frac{\pi}{4}\right) + O(z^{-3/2})$ as $z \longrightarrow \infty$. Writing $A(x) = \displaystyle\sum_j Q_j(x)J_{\nu_j}(\lambda_jx) = -P(x)$ and multiplying by $x^{1/2-d_0}$ gives $x^{1/2-d_0}A(x) = F(x)+o(1)$, where $F(x) = \displaystyle\sum_{j \in S_0} c_j \sqrt{\frac{2}{\pi \lambda_j}}\cos\left(\lambda_jx+\theta_j\right)$ with $\theta_j = -\nu_j\displaystyle\frac{\pi}{2}-\frac{\pi}{4}$. Since $\lambda_j$'s are pairwise distinct, the mean square of the almost periodic function $F$ \citep{corduneanu1989} is $\displaystyle\lim_{T\to\infty}\frac{1}{T}\int_0^T F(x)^2 dx = \sum_{j \in S_0}\frac{c_j^2}{\pi \lambda_j}>0$. So, $0 < \limsup|F| = M < \infty$ and
$\displaystyle\limsup_{x\to\infty} x^{1/2-d_0}|A(x)| = M$. But $A = -P$ is a
polynomial, for which $\limsup x^{1/2-d_0}|P(x)|$ is $0$, $\infty$, or
undefined but never a finite positive constant (the exponent
$d_P - d_0 + 1/2$ is a half-integer), hence we have a contradiction. Thus all
$Q_j\equiv 0$, and then $P\equiv 0$.
\end{proof}

\begin{proposition}\label{Prop4.2.2}
$\{\phi_i\psi_j\}_{i,j}$ is linearly independent, so
$\dim(V_1\odot V_2) = d_1d_2$.
\end{proposition}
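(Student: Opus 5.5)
Suppose $\sum_{i,j} c_{ij}\phi_i(x)\psi_j(x) = 0$ for all $x\in[a,b]$. We must show that every $c_{ij}=0$. The plan is to reduce this to Proposition \ref{Prop4.2.1} one knot span at a time, and then recover the individual coefficients from the linear independence of the B-splines.

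First we regroup by Bessel index. Set $S_j(x) = \sum_i c_{ij}\phi_i(x)\in V_1$, so the identity becomes
\[
\sum_{j=1}^{d_2} S_j(x)\,J_{\nu_j}(\lambda_j x) = 0 \quad\text{on } [a,b].
\]
Next pick any knot interval $I=(t_k,t_{k+1})$ of positive length. Such an interval is a nonempty open subinterval of $(0,\infty)$, since $a>0$. On $I$ each $S_j$ coincides with a single polynomial $Q_j$ of degree at most $p$. Applying Proposition \ref{Prop4.2.1} with $P\equiv 0$ gives $Q_j\equiv 0$ for every $j$, that is, $S_j$ vanishes on $I$.

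Since the nondegenerate knot intervals cover $[a,b]$ up to finitely many knots, each $S_j$ vanishes on all of $[a,b]$ except possibly at the knots. Continuity for $p\ge 1$, or right-continuity of the standard convention for $p=0$, then gives $S_j\equiv 0$ on $[a,b]$.

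The remaining step uses the classical linear independence of the B-spline basis on a clamped knot vector (de Boor, Curry--Schoenberg): $\sum_i c_{ij}\phi_i\equiv 0$ forces $c_{ij}=0$ for all $i$. Doing this for each $j$ makes all the coefficients vanish, so the $d_1d_2$ products $\{\phi_i\psi_j\}$ are linearly independent and span a space of dimension $d_1d_2$.

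The main obstacle is the reduction step. Proposition \ref{Prop4.2.1} applies only to global polynomial coefficients on an open interval, while the $S_j$ are merely piecewise polynomial. The localization to a single knot span resolves this, because the real-analytic continuation inside that proposition then operates on genuine polynomials. The points to check are that every knot span used has positive length (repeated knots in the clamped vector give empty spans, which are simply skipped) and that these spans cover $[a,b]$. No cancellation between different knot spans can occur, because the vanishing of $S_j$ is established separately on each span.
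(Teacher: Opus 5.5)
Your proposal is correct and follows the paper's proof essentially step for step. You regroup by Bessel index, restrict to each knot span (where the spline coefficients are polynomials of degree at most $p$), apply Proposition~\ref{Prop4.2.1} with $P\equiv 0$ to show that each $\sum_i c_{ij}\phi_i$ vanishes span by span, and finish with B-spline linear independence; your extra care about degenerate spans and the values at the knots only makes explicit what the paper leaves implicit.
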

\begin{proof}
Suppose $\displaystyle\sum_{i,j}\alpha_{ij}\phi_i\psi_j = 0$ on
$[a,b]$. On each knot interval $I_k=(t_k,t_{k+1})$ the splines restrict
to polynomials $\pi_{i,k}$ of degree $\le p$, so
$\displaystyle\sum_j\left(\sum_i\alpha_{ij}\pi_{i,k}\right)\psi_j = 0$ on $I_k$.
Proposition \ref{Prop4.2.1} ($P=0$) forces $\displaystyle\sum_i\alpha_{ij}\pi_{i,k}\equiv 0$ for all
$j,k$, hence $\displaystyle\sum_i\alpha_{ij}\phi_i\equiv 0$ on $[a,b]$ for each $j$;
B-spline independence gives $\alpha_{ij}=0$.
\end{proof}

\begin{proposition}\label{Prop4.2.3}
$V_1\cap(V_1\odot V_2) = \{0\}$.
\end{proposition}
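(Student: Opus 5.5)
The plan is to reduce the statement to Proposition \ref{Prop4.2.1} on each knot interval, in the same way Proposition \ref{Prop4.2.2} was obtained. Take an element $f \in V_1\cap(V_1\odot V_2)$. Then $f$ has two representations:
\[
f=\sum_i \beta_i\phi_i=\sum_{i,j}\alpha_{ij}\phi_i\psi_j \quad\text{on } [a,b].
\]
The goal is to show that every $\alpha_{ij}=0$. That forces $f=0$, and B-spline independence then gives $\beta_i=0$ as well.

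First fix a knot interval $I_k=(t_k,t_{k+1})$ of positive length; it lies in $(0,\infty)$ because $a>0$. On $I_k$ each $\phi_i$ restricts to a polynomial $\pi_{i,k}$ with $\deg\pi_{i,k}\le p$. Subtracting the two representations gives, for all $x\in I_k$,
\[
-\sum_i\beta_i\pi_{i,k}(x)+\sum_j\Bigl(\sum_i\alpha_{ij}\pi_{i,k}(x)\Bigr)J_{\nu_j}(\lambda_j x)=0 .
\]
This has exactly the form of Proposition \ref{Prop4.2.1}. The term $P=-\sum_i\beta_i\pi_{i,k}$ is a polynomial, and each coefficient $Q_j=\sum_i\alpha_{ij}\pi_{i,k}$ is a polynomial of degree at most $p$. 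The proposition therefore gives $Q_j\equiv0$ for every $j$ and $P\equiv 0$ on $I_k$.

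Next, let $k$ range over all nondegenerate knot intervals. For each fixed $j$ we get $\sum_i\alpha_{ij}\phi_i\equiv0$ on $[a,b]$, apart from finitely many knots, and continuity (or a measure-zero argument) removes those exceptions. Linear independence of the B-splines on a clamped knot vector then yields $\alpha_{ij}=0$ for all $i,j$, so $f\equiv0$. Equivalently, $P\equiv0$ on every interval gives $\sum_i\beta_i\phi_i\equiv0$ directly, and hence $\beta=0$.

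I expect the only real obstacle to be checking that Proposition \ref{Prop4.2.1} applies as stated, since here $P$ is nonzero a priori, unlike in Proposition \ref{Prop4.2.2}. The degree bound on $P$ never enters that proposition, so any polynomial $P$ is allowed. The other hypotheses to confirm are that $\deg Q_j\le p$ and that the $\lambda_j$ are pairwise distinct, and both are already built into the setup of $V_2$. I would also remark that the argument in fact gives the stronger statement that $V_1+V_1\odot V_2$ is a direct sum of dimension $d_1+d_1d_2$.
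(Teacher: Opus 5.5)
Your proof is correct and follows the paper's argument: restrict both representations to each nondegenerate knot interval, apply Proposition~\ref{Prop4.2.1} with the possibly nonzero polynomial $P$ to get $P\equiv 0$ and $Q_{j,k}\equiv 0$, and conclude with B-spline linear independence. Two small additions go beyond what the paper writes out: you handle the finitely many knot points explicitly, and you note that the argument gives linear independence of $\{\phi_i\}\cup\{\phi_i\psi_j\}$, hence a direct sum of dimension $d_1+d_1d_2$.
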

\begin{proof}
If $\displaystyle\sum_i a_i\phi_i = \sum_{i,j}c_{ij}\phi_i\psi_j$ on
$[a,b]$, restriction to $I_k$ yields
$\displaystyle P(x) - \sum_j Q_{j,k}(x)\psi_j(x) = 0$ with
$P = \displaystyle\sum_i a_i\pi_{i,k}$ and $\displaystyle Q_{j,k} = \sum_i c_{ij}\pi_{i,k}$
(polynomials of degree $\le p$). Proposition \ref{Prop4.2.1} gives $P\equiv 0$ and
$Q_{j,k}\equiv 0$ on every $I_k$; independence of $\{\phi_i\}$ then
gives $a_i = c_{ij} = 0$ for all $i,j$.
\end{proof}

\begin{theorem}\label{Thm4.2.1}
$V_{\mathrm{final}} = V_1 + V_2 + (V_1\odot V_2)
= V_1\oplus(V_1\odot V_2)$ and
$\dim V_{\mathrm{final}} = d_1 + d_1d_2$.
\end{theorem}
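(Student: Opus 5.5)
The argument has two parts. First, the summand $V_2$ is redundant: it already lies inside the product space $V_1\odot V_2$. Second, the remaining sum is direct by Proposition \ref{Prop4.2.3}, and its dimension then follows from Proposition \ref{Prop4.2.2}. So the plan is to prove the inclusion $V_2\subseteq V_1\odot V_2$ and then assemble the pieces.

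\textbf{Step 1 (absorbing $V_2$).} On a clamped knot vector over $[a,b]$, the degree-$p$ B-splines form a partition of unity: $\sum_{i=1}^{d_1}\phi_i(x)=1$ for all $x\in[a,b]$. This is a standard property of B-splines, and the paper already alludes to it when discussing the near-null vector. Multiplying it by $\psi_j$ gives
\begin{equation}
\psi_j=\sum_{i=1}^{d_1}\phi_i\psi_j\in V_1\odot V_2 \quad\text{on } [a,b],
\end{equation}
for every $j$. Hence $V_2\subseteq V_1\odot V_2$, and therefore $V_1+V_2+(V_1\odot V_2)=V_1+(V_1\odot V_2)$.

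\textbf{Step 2 (directness and dimension).} By Proposition \ref{Prop4.2.3}, $V_1\cap(V_1\odot V_2)=\{0\}$, so the sum in Step 1 is direct: $V_{\mathrm{final}}=V_1\oplus(V_1\odot V_2)$. For a direct sum, $\dim V_{\mathrm{final}}=\dim V_1+\dim(V_1\odot V_2)$. Here $\dim V_1=d_1$ because B-splines on a clamped knot vector are linearly independent, and $\dim(V_1\odot V_2)=d_1d_2$ by Proposition \ref{Prop4.2.2}. This gives $d_1+d_1d_2$.

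\textbf{Main obstacle.} The only step that is not bookkeeping is Step 1. It relies on the partition-of-unity identity holding on all of $[a,b]$, including the right endpoint, which is exactly where the clamped (open-uniform) knot convention matters. If that property were unavailable, I would instead apply Proposition \ref{Prop4.2.1} with $P=0$ to test whether each $\psi_j$ can be written in the product space. However, the theorem's claimed dimension $d_1+d_1d_2$, rather than $d_1+d_2+d_1d_2$, only holds because $V_2$ is absorbed, so the partition of unity is the essential input. I would state it explicitly, with a reference to de Boor, before concluding.
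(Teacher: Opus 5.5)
Your proposal is correct and follows the paper's proof essentially verbatim. The partition of unity $\sum_i\phi_i=1$ on $[a,b]$ gives $\psi_j=\sum_i\phi_i\psi_j$, hence $V_2\subseteq V_1\odot V_2$; Proposition~\ref{Prop4.2.3} then makes $V_1+(V_1\odot V_2)$ direct, and Proposition~\ref{Prop4.2.2}, with B-spline independence giving $\dim V_1=d_1$, yields $d_1+d_1d_2$. Your caveat that the partition of unity must hold up to the right endpoint $b$ is apt (the paper's implementation in Section~\ref{sec:bspline} in fact breaks it at $u=1$), but it does not affect the idealized statement.
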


\begin{proof}
The clamped B-spline basis satisfies the partition of
unity $\displaystyle\sum_i\phi_i(x) = 1$ on $[a,b]$, so for each $j$,
\[
\psi_j(x) = \psi_j(x)\sum_{i=1}^{d_1}\phi_i(x)
= \sum_{i=1}^{d_1}\phi_i(x)\psi_j(x) \in V_1\odot V_2,
\]
i.e.\ $V_2\subseteq V_1\odot V_2$. Hence
$V_{\mathrm{final}} = V_1 + (V_1\odot V_2)$; the sum is direct by
Proposition \ref{Prop4.2.3} and $\dim(V_1\odot V_2) = d_1d_2$ by Proposition \ref{Prop4.2.2}.
\end{proof}

We emphasize that the count $d_1 + d_2 + d_1d_2$ that one might expect
from the three families overcounts by exactly $d_2$: the additive
Bessel branch is entirely absorbed by the product space. The standalone
$V_2$ term is retained in the architecture as an optimization
convenience but adds no approximation capacity in the idealized model.

\subsection{Lipschitz Continuity \& Gradient Stability}

Let $\Omega\subset\mathbb{R}^{d}$ be a convex compact domain. A single
interaction layer of the hybrid architecture computes
\begin{align*}
    \phi_{\mathrm{int}}(x;\,w_S,w_B)
&= \phi_S(x) + \phi_B(x) + \phi_S(x)\phi_B(x) \\
\phi_S(x) &= \sum_{i=1}^{d_1} w_{S,i} B_i(x) \\
\phi_B(x) &= \sum_{m=1}^{d_2} w_{B,m} J_m(x)
\end{align*}
where $\displaystyle\{B_i\}_{i=1}^{d_1}$ is a tensor-product B-spline basis on a
rectilinear grid covering $\Omega$, i.e.\
$B_i(x) = \displaystyle\prod_{k=1}^{d} B^{(p)}_{i_k}(x_k)$, and $\displaystyle\{J_m\}_{m=1}^{d_2}$
are Bessel functions of the first kind restricted to $\Omega$. We assume
the weights are confined to a compact set
$\mathcal{W}\subset\mathbb{R}^{d_1+d_2}$.

\begin{theorem}\label{Thm4.3.1}
There exists a constant $L_{\mathrm{int}}<\infty$, depending only
on $\mathcal{W}$ and the knot sequence, such that $\phi_{\mathrm{int}}$
is $L_{\mathrm{int}}$-Lipschitz on $\Omega$, and the weight gradients
are uniformly bounded by $1+M_B$ and $1+M_S$, where
$M_S = \sup_{\mathcal{W}}\|w_S\|_1$ and
$M_B = \sup_{\mathcal{W}}\|w_B\|_1$.
\end{theorem}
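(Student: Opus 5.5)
The proof splits into two parts: Lipschitz continuity in $x$, and the weight-gradient bounds. Both rest on the same elementary fact, namely that every basis function involved is bounded by one in absolute value on $\Omega$.

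\textbf{Step 1: uniform bounds on the basis functions.}
\begin{itemize}
\item B-splines are nonnegative and form a partition of unity (as already used in Theorem \ref{Thm4.2.1}). Tensor products of univariate partitions of unity again sum to one, so $0\le B_i(x)\le 1$ and $\sum_i B_i(x)=1$.
\item For Bessel functions of the first kind with integer order $m\ge 0$ and real argument, the classical bound $|J_m(z)|\le 1$ holds. It follows, for instance, from the integral representation $J_m(z)=\frac{1}{\pi}\int_0^\pi\cos(m\tau - z\sin\tau)\,d\tau$.
\item I read $J_m(x)$ on $\Omega\subset\mathbb{R}^d$ as $J_m$ applied to a fixed scalar affine or norm feature of $x$. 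I would state this interpretation explicitly.
\end{itemize}
Consequently, for $w\in\mathcal{W}$,
\[
|\phi_S(x)|\le \|w_S\|_1\le M_S, \qquad |\phi_B(x)|\le \|w_B\|_1\le M_B .
\]
Both suprema are finite because $\mathcal{W}$ is compact.

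\textbf{Step 2: the weight gradients.} These come directly from the formula:
\[
\frac{\partial\phi_{\mathrm{int}}}{\partial w_{S,i}} = B_i(x)\,\bigl(1+\phi_B(x)\bigr), \qquad
\frac{\partial\phi_{\mathrm{int}}}{\partial w_{B,m}} = J_m(x)\,\bigl(1+\phi_S(x)\bigr).
\]
Applying Step 1 gives $|\partial_{w_{S,i}}\phi_{\mathrm{int}}|\le 1+M_B$ and $|\partial_{w_{B,m}}\phi_{\mathrm{int}}|\le 1+M_S$, uniformly in $x\in\Omega$ and $w\in\mathcal{W}$.

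\textbf{Step 3: Lipschitz continuity in $x$.} The function $\phi_{\mathrm{int}}$ is continuous and piecewise $C^1$. I will bound the gradient and then integrate along segments, which is legitimate because $\Omega$ is convex. By the product rule,
\[
\nabla_x\phi_{\mathrm{int}} = (1+\phi_B)\nabla\phi_S + (1+\phi_S)\nabla\phi_B ,
\]
so
\[
\|\nabla_x\phi_{\mathrm{int}}\|\le (1+M_B)\,M_S\,\max_i\sup\|\nabla B_i\| + (1+M_S)\,M_B\,\max_m\sup|J_m'| .
\]
The two derivative factors are handled as follows.
\begin{itemize}
\item \emph{Bessel term.} The recurrence $J_m'=\tfrac12(J_{m-1}-J_{m+1})$ gives $|J_m'|\le 1$, multiplied by the Lipschitz constant of the inner feature map.
\item \emph{Spline term.} The univariate derivative formula gives $\|(B^{(p)}_{i})'\|_\infty\le 2p/\min_k(\text{knot spacing})$, a constant depending only on the knot sequence. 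Each tensor factor is bounded by $1$, so the partial derivatives of $B_i$ inherit this bound.
\end{itemize}
This yields a finite $L_{\mathrm{int}}$ depending only on $\mathcal{W}$ and the knots, with $p$ and the feature map treated as fixed.

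\textbf{Main obstacle.} The delicate point is Step 3 at the knots, where degree-$p$ splines may fail to be differentiable; for $p=1$ they are only $C^0$. I would first handle this by noting that each $B_i$ is continuous and piecewise polynomial on finitely many cells. Along any segment in the convex set $\Omega$, the restriction is piecewise $C^1$ with finitely many breakpoints. The mean value inequality therefore applies on each sub-piece, and summing the pieces gives the Lipschitz bound with the same constant. If needed, I would fall back on Rademacher's theorem: a bound on the a.e. gradient of a continuous piecewise-smooth function implies the Lipschitz property on convex sets.
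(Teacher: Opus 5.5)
Your proof is correct and follows essentially the same route as the paper's: the uniform bounds $0\le B_i\le 1$ and $|J_m|\le 1$, direct computation of the weight gradients $B_i(1+\phi_B)$ and $J_m(1+\phi_S)$, and a product-rule bound on $\nabla_x\phi_{\mathrm{int}}$ via $|J_m'|\le 1$ and bounded spline derivatives, closed by a mean-value argument on the convex set $\Omega$. You are more careful than the paper at the knots and about what $J_m(x)$ means for $x\in\mathbb{R}^d$; note that the spline-derivative constant should use the smallest gap between \emph{distinct} knots, since a clamped knot vector repeats knots, and that the fallback you attribute to Rademacher's theorem is actually its converse, though your segment-by-segment argument already suffices without it.
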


\begin{proof}
Normalized B-splines satisfy $0\le B_i(x)\le 1$ with
$\displaystyle\sum_i B_i = 1$, and $|J_m|\le 1$ for integer orders $m\ge 0$, so
$|\phi_S|\le M_S$ and $|\phi_B|\le M_B$. The derivatives
$\partial_{x_k}B_i$ exist almost everywhere and are piecewise
polynomials of lower degree on the compact grid, hence uniformly
bounded; the Bessel recurrence
$J_m'(x) = \displaystyle\frac{1}{2}\bigl(J_{m-1}(x)-J_{m+1}(x)\bigr)$ gives
$|J_m'|\le 1$ on compacts. Thus
$\|\nabla_x\phi_S\|\le L_S$ and $\|\nabla_x\phi_B\|\le L_B$ for finite
constants $L_S, L_B$. By the product rule,
\[
\nabla_x\phi_{\mathrm{int}}
= \nabla_x\phi_S + \nabla_x\phi_B
+ \phi_B\,\nabla_x\phi_S + \phi_S\,\nabla_x\phi_B,
\]
so $\|\nabla_x\phi_{\mathrm{int}}\|
\le L_S + L_B + M_B L_S + M_S L_B =: L_{\mathrm{int}}$. Since $\Omega$
is convex, the mean value theorem yields
$|\phi_{\mathrm{int}}(x)-\phi_{\mathrm{int}}(y)|
\le L_{\mathrm{int}}\|x-y\|$. For the weight gradients,
\[
\frac{\partial\phi_{\mathrm{int}}}{\partial w_{S,i}}
= B_i(x)\bigl(1+\phi_B(x)\bigr),\qquad
\frac{\partial\phi_{\mathrm{int}}}{\partial w_{B,m}}
= J_m(x)\bigl(1+\phi_S(x)\bigr),
\]
which are bounded by $1+M_B$ and $1+M_S$, respectively; in particular,
gradients cannot explode while weights remain in $\mathcal{W}$.
\end{proof}

\begin{remark}\label{Rem4.3}
For an $L$-layer composition with each layer mapping into
a compact set and weights in $\mathcal{W}$, the network is Lipschitz
with constant at most $\prod_{\ell} L_{\mathrm{int}}^{(\ell)}<\infty$.
\end{remark}

\subsection{Universal approximation via Stone--Weierstrass}

Let $\Omega\subset\mathbb{R}^{d}$ be compact,
$\displaystyle\mathcal{B}=\{B_i\}_{i=1}^{d_1}$ a tensor-product B-spline basis on
$\Omega$ (partition of unity), and
$\mathcal{J}=\{J_m\}_{m=1}^{d_2}$ Bessel functions of the first kind,
regarded as elements of $C(\Omega)$. Let $\mathcal{A}$ be the algebra
over $\mathbb{R}$ generated by $\mathcal{B}\cup\mathcal{J}$: all finite
linear combinations of finite products of the generators.

\begin{theorem}\label{Thm4.4.1}
$\mathcal{A}$ is dense in $C(\Omega)$ in the uniform norm;
consequently, the hybrid architecture can approximate any
$f\in C(\Omega)$ to arbitrary accuracy.
\end{theorem}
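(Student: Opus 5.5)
The plan is to apply the Stone--Weierstrass theorem to the subalgebra $\mathcal{A}\subset C(\Omega)$. By construction $\mathcal{A}$ is closed under addition, scalar multiplication and multiplication, so two hypotheses remain to check: (i) $\mathcal{A}$ contains the constant functions, and (ii) $\mathcal{A}$ separates the points of $\Omega$. Compactness of $\Omega$ and the fact that every generator is continuous on $\Omega$ are already part of the setup. The B-spline generators are continuous as long as the degree satisfies $p\ge 1$, and the Bessel functions of the first kind with integer order $m\ge 0$ are entire.

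For (i), I will use the partition of unity already invoked in the proof of Theorem \ref{Thm4.2.1}: $\sum_{i}B_i\equiv 1$ on $\Omega$. Hence $1\in\operatorname{span}\mathcal{B}\subset\mathcal{A}$, and all constants lie in $\mathcal{A}$.

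For (ii), I will not try to separate points with arbitrary products of generators. Instead I will show that each coordinate function $x\mapsto x_k$ lies in $\operatorname{span}\mathcal{B}$. For a univariate clamped B-spline basis of degree $p\ge 1$, Marsden's identity (polynomial reproduction) gives
\[
t=\sum_{i}\xi_i B^{(p)}_i(t),
\]
where the $\xi_i$ are the Greville abscissae, i.e.\ averages of $p$ consecutive knots. Multiplying by the partition of unity in the other coordinates gives
\[
x_k=\sum_{i}\xi_{i_k}\prod_{l}B^{(p)}_{i_l}(x_l),
\]
which is a linear combination of the tensor-product functions $B_i$. So $\mathcal{A}$ contains every coordinate function $x_k$. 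If $x\neq y$ in $\Omega$, some coordinate satisfies $x_k\neq y_k$, and that coordinate function separates $x$ from $y$. Stone--Weierstrass then yields density of $\mathcal{A}$ in $C(\Omega)$. In fact the argument shows that $\mathcal{A}$ contains all polynomials, so one could equally invoke the classical Weierstrass theorem. On this route the Bessel generators are not needed for density at all; their role is the quantitative approximation behaviour discussed in the previous subsections.

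The main obstacle is the hypothesis $p\ge 1$, together with the requirement that the grid covers all of $\Omega$. For $p=0$ the generators are indicator-type functions, which are not continuous, and a finite knot set cannot separate two points lying in the same cell. I will therefore state $p\ge 1$ explicitly, as in the cubic KAN setting.

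A second point needs care: the ``consequently'' clause. A single interaction layer only realizes products of degree at most two in the generators. I would argue that compositions of layers, together with the additive node structure, realize arbitrary finite polynomial expressions in the generators. If that argument turns out to be delicate, I would state the consequence as a property of the function class $\mathcal{A}$ spanned by the architecture's features, rather than of a fixed-depth network.
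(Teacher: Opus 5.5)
Your density argument is correct. It shares the paper's skeleton: Stone--Weierstrass, with the constants coming from $\sum_i B_i\equiv 1$. The two routes differ at point separation.

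\textbf{The paper's separation step.} The paper asserts that the univariate basis in a coordinate with $x_k\neq y_k$ separates $x_k$ from $y_k$. It then claims that choosing the remaining tensor-product factors strictly positive at both points yields a single $B_i$ with $B_i(x)\neq B_i(y)$. The first step is left unproved, and it fails for $p=0$. The second does not follow as stated: when both values of the separating factor $B_{i_k}$ are positive, their ratio can be exactly cancelled by the ratio of the other factors. The conclusion survives only because summing out the other coordinates with the partition of unity puts the univariate $B_{i_k}(x_k)$ itself in $\mathcal{A}$.

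\textbf{What your route adds.} Your Marsden-identity step, $x_k=\sum_i\xi_{i_k}B_i(x)$, supplies exactly the missing univariate fact and makes separation immediate through the coordinate functions. It also brings out hypotheses the paper leaves implicit: $p\ge 1$ and a grid covering $\Omega$. You should additionally require interior knot multiplicities at most $p$, so that the generators are continuous. Finally, it gives the stronger conclusion that $\mathcal{A}$ contains every polynomial. The paper's route, once repaired, needs only qualitative univariate separation. Yours uses the more specific polynomial-reproduction property, but it delivers more. Your observation that the Bessel generators play no role applies equally to the paper's proof.

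\textbf{The ``consequently'' clause.} Your composition plan is the same sketch the paper offers, and the paper does not make it rigorous either. Its claim that composition ``generates successive products of its own outputs'' glosses over the fact that a later layer evaluates its basis at earlier outputs rather than multiplying them.

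Your fallback would not prove the statement as written. Some growth in depth, width or the grid is unavoidable. A single layer with fixed knots only produces elements of the finite-dimensional, hence closed, space $\mathrm{span}(\mathcal{B}\cup\mathcal{J}\cup\{B_iJ_m\})$, which is a proper subspace of $C(\Omega)$ whenever $\Omega$ is infinite.

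Your own tool closes the gap in the idealised model. With $w_B=0$ a layer is a pure tensor-product spline. By Marsden's identity it reproduces affine maps and products $y_ay_b$ ($a\neq b$) of its inputs on its grid box. So compositions of sufficient width and depth realise every polynomial exactly on $\Omega$, rescaling as needed to keep intermediate values inside each grid. Weierstrass then finishes the argument.
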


\begin{proof}
$\mathcal{A}$ is a subalgebra of $C(\Omega)$ by
construction (closed under addition, multiplication, and scalar
multiplication). It contains the constants, since partition of unity
gives $\mathbf{1} = \sum_{i=1}^{d_1} B_i$. It separates points: if
$x\neq y$, choose a coordinate $k$ with $x_k\neq y_k$; the univariate
B-spline basis in coordinate $k$ separates $x_k$ from $y_k$, and
selecting tensor-product factors strictly positive at both $x$ and $y$
(possible by local support and partition of unity) yields
$B_i\in\mathcal{A}$ with $B_i(x)\neq B_i(y)$. By Stone-Weierstrass \citep{rudin1976},
$\overline{\mathcal{A}}^{\|\cdot\|_\infty} = C(\Omega)$.

It remains to connect $\mathcal{A}$ to the network. A single layer
computes functions in
$\mathcal{V}_1 =
\mathrm{span}\bigl(\mathcal{B}\cup\mathcal{J}\cup
\{B_iJ_m\}_{i,m}\bigr)\subset\mathcal{A}$, which is not itself an
algebra; however, the interaction term $\phi_S\phi_B$ endows the
network with multiplicative capability. Composing layers and increasing
width, the network generates successive products of its own outputs,
hence arbitrary finite products of the basis functions, so
$\mathcal{A}\subset\overline{\mathcal{V}}$ for the full network space
$\mathcal{V}$. Therefore
$C(\Omega)\subset\overline{\mathcal{V}}$, i.e.\ for every
$f\in C(\Omega)$ and $\epsilon>0$ there exists $g\in\mathcal{V}$ with
$\|f-g\|_\infty<\epsilon$.
\end{proof}

\section{Dataset}\label{dataset}

We evaluated the proposed model on four public classification benchmarks (Table~\ref{tab:datasets}) and, for a basis-conditioning check, on synthetic one-dimensional functions (Section~\ref{sec:synthetic}).

\begin{table}[htbp]
\centering
\caption{Classification benchmarks. Iris uses a random train/test split of its 150 samples.}
\label{tab:datasets}
\small
\setlength{\tabcolsep}{3pt}
\begin{tabular}{@{}p{0.19\linewidth}p{0.22\linewidth}p{0.30\linewidth}rr@{}}
\toprule
Dataset & \shortstack[l]{Samples\\(train / test)} & Input & Dim. & Classes \\
\midrule
Iris & 120 / 30 & 4 tabular features & 4 & 3 \\
MNIST & 60,000 / 10,000 & $28\times28$ grayscale, flattened & 784 & 10 \\
Fashion-MNIST & 60,000 / 10,000 & $28\times28$ grayscale, flattened & 784 & 10 \\
CIFAR-10 & 50,000 / 10,000 & $32\times32\times3$ RGB, flattened (C,H,W) & 3072 & 10 \\
\bottomrule
\end{tabular}
\end{table}

Iris was loaded with scikit-learn \citep{fisher1936iris}; MNIST \citep{lecun1998}, Fashion-MNIST \citep{xiao2017fashion} and CIFAR-10 \citep{krizhevsky2009} with \texttt{torchvision.datasets}, using their official train/test partitions. Iris has no predefined partition, so its samples were randomly permuted (redrawn for each seed) into 120 training and 30 test samples without stratification. Labels are integer class indices passed directly to the cross-entropy loss; samples are treated as independent, there are no missing values, and no data augmentation was used.

Iris features were min--max scaled to $[-1,1]$, $x'=2\,\frac{x-\min(x)}{\max(x)-\min(x)+10^{-8}}-1$, with the minimum and maximum computed per feature over all 150 samples before partitioning. Image pixels were scaled to $[0,1]$, standardized as $x'=(x/255-\mu)/\sigma$ and flattened, with $(\mu,\sigma)=(0.1307,0.3081)$ for MNIST, $(0.2860,0.3530)$ for Fashion-MNIST, and per-channel $\mu=(0.4914,0.4822,0.4465)$, $\sigma=(0.2470,0.2435,0.2616)$ for CIFAR-10.

\subsection{Synthetic functions}\label{sec:synthetic}
$N=500$ equispaced points $x_n\in[-1,1]$ were used with six noise-free targets: $\sin(5x)$, $J_0(3x)$, $J_1(5x)+J_2(3x)$, $x^2$, $|x|$ and $\mathbb{1}[x>0]$.

\section{Methodology}\label{methodology}
We implemented a feed-forward Kolmogorov--Arnold Network (KAN) in PyTorch in which each layer combines three parallel learnable paths on the normalized layer input: a SiLU base path, a cubic B-spline path and a Bessel-function path. Two-layer networks of widths $4\rightarrow8\rightarrow3$ (Iris), $784\rightarrow64\rightarrow10$ (MNIST, Fashion-MNIST) and $3072\rightarrow128\rightarrow10$ (CIFAR-10) output class logits.

\subsection{Layer computation}
A layer maps $X\in\mathbb{R}^{N\times d_{\mathrm{in}}}$ to $Y\in\mathbb{R}^{N\times d_{\mathrm{out}}}$. The input is first normalized over the features of each sample, $U=\mathrm{LayerNorm}(X)$ (PyTorch defaults, $\epsilon=10^{-5}$), in every layer including the first. The three paths are
\begin{align}
H_0&=\mathrm{SiLU}(U)\,W_0^{\top}, & W_0&\in\mathbb{R}^{d_{\mathrm{out}}\times d_{\mathrm{in}}},\\
H_s&=B\,W_s^{\top}, & W_s&\in\mathbb{R}^{d_{\mathrm{out}}\times d_{\mathrm{in}}M},\\
H_b&=J\,W_b^{\top}, & W_b&\in\mathbb{R}^{d_{\mathrm{out}}\times d_{\mathrm{in}}P}.
\end{align}
Here $B\in\mathbb{R}^{N\times d_{\mathrm{in}}M}$ holds the $M=5$ cubic B-spline basis values of each feature of $\mathrm{clip}(U,-1,1)$ (Section~\ref{sec:bspline}; Figure~\ref{fig:basis}a), and $J\in\mathbb{R}^{N\times d_{\mathrm{in}}P}$ holds the Bessel features $J_{\nu_p}(z_{ip})$ with $P=4$, $\nu_p=p-1$ and
\begin{equation}
z_{ip}=\mathrm{clip}\!\left(\alpha_{ip}U_{\cdot i}+\beta_{ip},\,-20,\,20\right),
\end{equation}
where $\alpha,\beta\in\mathbb{R}^{d_{\mathrm{in}}\times P}$ are learnable and shared across output units. The paths are combined by summation, $Y=H_0+H_s+H_b$, or by learnable mixing, $Y=H_0+\sigma(\lambda)\odot H_s+(1-\sigma(\lambda))\odot H_b$, with one logit $\lambda_j$ per output unit initialized to zero. Mixing is used only for Fashion-MNIST (and one Iris ablation arm). Layers are composed directly, with no activation or dropout between them.

\subsection{B-spline path}\label{sec:bspline}
The basis has degree $k=3$ and $M=5$ functions per input, with a clamped uniform knot vector on $[-1,1]$ whose single interior knot is $0$: $\mathbf{t}=(-1,-1,-1,-1,0,1,1,1,1)$. It is evaluated with the Cox--de Boor recursion (denominator offset $10^{-8}$). A requested number of splines below $k+1=4$, including zero, is raised to four, so the spline path is always present. Inputs are clamped to $[-1,1]$ before evaluation.

The right end of the last knot interval is closed by adding $\mathbb{1}[u=t_{\max}]$ to the last degree-0 interval, which for this knot vector is the degenerate interval $[1,1]$ rather than $[0,1)$. As implemented, a feature clamped to exactly $u=1$ therefore has all five basis values equal to zero, whereas $u=-1$ gives $(1,0,0,0,0)$ and values just below $1$ give a vector close to $(0,0,0,0,1)$.

\subsection{Bessel path}\label{sec:bessel}
Each of the $P=4$ terms ($J_0,\dots,J_3$) has its own scale $\alpha_{ip}\sim1+0.1\,\mathcal{N}(0,1)$ and shift $\beta_{ip}\sim0.05\,\mathcal{N}(0,1)$ per input feature. Because \texttt{torch.special} provides only $J_0$ and $J_1$, the functions are computed in a custom autograd function: $J_0,J_1$ with \texttt{torch.special.bessel\_j0/j1}; $J_2,J_3$ by the recurrence $J_{n+1}(z)=\displaystyle\frac{2n}{z}J_n(z)-J_{n-1}(z)$ with $|z|\ge0.05$; orders $\nu\ge4$ with \texttt{scipy.special.jv} on the CPU in float64. The backward pass uses $\partial J_\nu/\partial z=\tfrac12\bigl(J_{\nu-1}(z)-J_{\nu+1}(z)\bigr)$, so the gradient of $J_3$ also calls the CPU routine for $J_4$.

\begin{figure}[htbp]
\centering
\includegraphics[width=\textwidth]{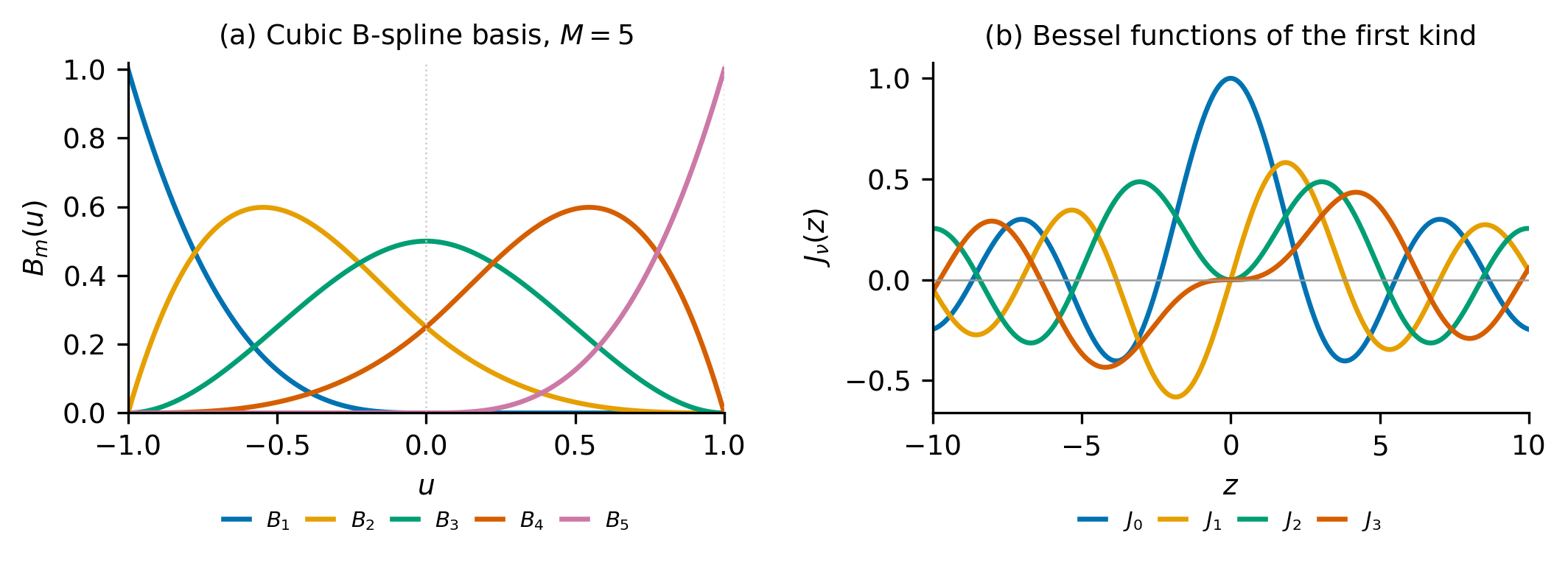}
\caption{Basis functions of the two parametric paths. (a)~The five cubic B-spline basis functions on $[-1,1]$ with knot vector $(-1,-1,-1,-1,0,1,1,1,1)$, evaluated for $u\in[-1,1)$ (at exactly $u=1$ the implemented basis is zero; Section~\ref{sec:bspline}). (b)~The Bessel functions $J_0$--$J_3$ used in the Bessel path, shown for $z\in[-10,10]$.}
\label{fig:basis}
\end{figure}
\subsection{Model size, objective and optimization}
Weights use Kaiming-uniform initialization (PyTorch defaults). A layer has $2d_{\mathrm{in}}+d_{\mathrm{out}}d_{\mathrm{in}}(1+M+P)+2d_{\mathrm{in}}P\,[+d_{\mathrm{out}}]$ parameters (bracket: mixing only), giving 680 (Iris), 516,640 (MNIST), 516,714 (Fashion-MNIST) and 3,976,960 (CIFAR-10). The loss is the mini-batch cross-entropy plus a penalty on the Bessel scales,
\begin{equation}
\mathcal{L}=\frac{1}{|\mathcal{B}|}\sum_{(x,y)\in\mathcal{B}}\mathrm{CE}\bigl(f_\theta(x),y\bigr)+\lambda_\alpha\sum_{\ell}\operatorname{mean}\bigl[(\alpha^{(\ell)})^{2}\bigr],\qquad\lambda_\alpha=10^{-4}.
\end{equation}
Parameters are updated with AdamW (default $\beta_1,\beta_2,\epsilon$; weight decay $10^{-4}$) after clipping the global gradient norm to $1.0$, with a cosine learning-rate schedule (period equal to the number of epochs, stepped per epoch). Table~\ref{tab:training} lists the remaining settings.

\subsection{Baselines and variants}\label{sec:baselines}
\subsubsection{Parameter-matched MLP} A one-hidden-layer ReLU network with hidden width $H=\mathrm{round}\bigl((P_{\mathrm{target}}-C)/(d_0+C+1)\bigr)=650$ (516,760 parameters) for MNIST and Fashion-MNIST, trained with AdamW, gradient clipping and a cosine schedule but without the $\alpha$-penalty (Table~\ref{tab:training}). EfficientKAN, BSRBF-KAN and FC-KAN accuracies are quoted from their publications and were not re-run.

\subsubsection{Parallel fusion (exploratory, MNIST)} Two $784\rightarrow32\rightarrow10$ networks are trained jointly: a spline network ($M=5$, no Bessel terms) and a Bessel network (orders $\{0,1\}$). Because of the four-spline minimum, the Bessel network also contains four B-splines per input (hence 336,832 parameters). Their outputs $o_S,o_B$ are fused by $o_S+o_B$, by $o_S+o_B+o_S\odot o_B$, or by a linear layer on $[o_S;o_B]$ (337,042 parameters).

\subsubsection{Iris ablation} The same layer is trained with $P\in\{0,2,4,6\}$ Bessel terms, with learnable mixing, or with a request for a single spline (raised to four, 784 parameters).

\subsection{Training and evaluation}\label{sec:setup}
\begin{table}[htbp]
\centering
\caption{Training configuration. All runs use AdamW (weight decay $10^{-4}$), gradient-norm clipping at 1.0 and a cosine schedule.}
\label{tab:training}
\small
\begin{tabular}{@{}llrrrcc@{}}
\toprule
Dataset & Model & Epochs & LR & Batch & Mixing & Seeds \\
\midrule
Iris & BesselKAN & 60 & $3\times10^{-3}$ & 32 & no & 5 \\
MNIST & BesselKAN & 25 & $3\times10^{-3}$ & 64 & no & 5 \\
Fashion-MNIST & BesselKAN & 35 & $3\times10^{-3}$ & 64 & yes & 5 \\
CIFAR-10 & BesselKAN & 25 & $3\times10^{-3}$ & 64 & no & 5 \\
MNIST / F-MNIST & MLP & 25 / 35 & $10^{-3}$ & 64 & -- & 5 \\
MNIST & Parallel fusion & 25 & $10^{-3}$ & 64 & no & 1 \\
\bottomrule
\end{tabular}
\end{table}

Models were implemented in PyTorch (Python 3.12, PyTorch 2.10) and trained on an Apple-silicon GPU (MPS backend). Evaluation batches have 256 samples (64 for Iris); no warmup, mixed precision, dropout or early stopping was used. Each configuration was run with seeds $\{42,123,456,789,2024\}$ (Python, NumPy and PyTorch generators; for Iris this also redraws the partition), and we report mean $\pm$ sample standard deviation. The metric is top-1 accuracy on the test set, evaluated after every epoch; the value reported for a run is the maximum over epochs (the saved weights are from the last epoch). Time per epoch includes evaluation.

\subsection{Diagnostics}\label{sec:conditioning}
\emph{Edge functions.} The full first-layer edge functions were evaluated on 200 points of 0-1.5,1.5] each output--input pair, and their curve shapes were clustered with \texttt{k-means} (=45000 5000. We also extracted the spline and Bessel contributions of the last layer separately.

\emph{Basis conditioning.} On the 500 synthetic points we formed the design matrix $\Theta$ from the layer's clamped B-spline basis (float32) and $J_\nu(x_n)$ ($\alpha=1$, $\beta=0$, no layer normalization), computed $\kappa(G)=\sigma_{\max}(G)/\sigma_{\min}(G)$ for the Gram matrix $G=\Theta^{\top}\Theta$, and fitted each target by least squares (\texttt{numpy.linalg.lstsq}) with the 5-spline, 4-order Bessel and combined bases, reporting the in-sample RMSE.

\section{Results}\label{sec:results}
Accuracies are mean $\pm$ standard deviation over five seeds unless stated otherwise.

\subsection{Benchmark classification}
On MNIST the hybrid KAN reaches $97.53\pm0.07\%$ and on Fashion-MNIST (with mixing) $89.34\pm0.30\%$, close to the reported BSRBF-KAN values (97.59\% and 89.38\%) and below the reported FC-KAN values (97.91\% and 89.99\%) (Table~\ref{tab:main}, Figure~\ref{fig:acc}). The parameter-matched MLP is more accurate on both datasets ($98.55\pm0.03\%$ and $90.19\pm0.11\%$); the paired per-seed difference is $1.01\pm0.05$ points on MNIST ($t=43.0$, 4 d.f.) and $0.85\pm0.37$ points on Fashion-MNIST ($t=5.1$). On CIFAR-10 (flattened inputs, 3.98M parameters) the KAN reaches $51.44\pm0.31\%$; no MLP was trained for this dataset. On Iris it reaches $80.67\pm6.41\%$. The KAN takes about 40~s per epoch on MNIST and Fashion-MNIST, 2.7--2.8 times the MLP, partly because the gradient of $J_3$ uses the CPU.

\begin{table}[htbp]
\centering
\caption{Test accuracy (\%). ``Time'' is seconds per epoch on the MPS GPU. Literature values were quoted, not re-run.}
\label{tab:main}
\small
\setlength{\tabcolsep}{3pt}
\begin{tabular}{@{}llrrrr@{}}
\toprule
Dataset & Model & Params & Accuracy & Epochs & Time (s) \\
\midrule
MNIST & BesselKAN (ours) & 516,640 & $97.53\pm0.07$ & 25 & 40.3 \\
MNIST & MLP (param-matched) & 516,760 & $98.55\pm0.03$ & 25 & 14.8 \\
MNIST & EfficientKAN \cite{bsrbfkan2025} & -- & $\approx97.0$ & 25 & -- \\
MNIST & BSRBF-KAN \cite{bsrbfkan2025} & -- & 97.59 & 25 & -- \\
MNIST & FC-KAN (DoG+BS) \cite{fckan2026} & -- & 97.91 & 25 & -- \\
\midrule
Fashion-MNIST & BesselKAN + mixing (ours) & 516,714 & $89.34\pm0.30$ & 35 & 40.7 \\
Fashion-MNIST & MLP (param-matched) & 516,760 & $90.19\pm0.11$ & 35 & 14.3 \\
Fashion-MNIST & BSRBF-KAN \cite{bsrbfkan2025} & -- & 89.38 & 35 & -- \\
Fashion-MNIST & FC-KAN (DoG+BS) \cite{fckan2026} & -- & 89.99 & 35 & -- \\
\midrule
CIFAR-10 & BesselKAN (ours) & 3,976,960 & $51.44\pm0.31$ & 25 & 55.8 \\
Iris & BesselKAN (ours) & 680 & $80.67\pm6.41$ & 60 & 0.08 \\
\bottomrule
\end{tabular}
\end{table}

\begin{figure}[htbp]
\centering
\includegraphics[width=\textwidth]{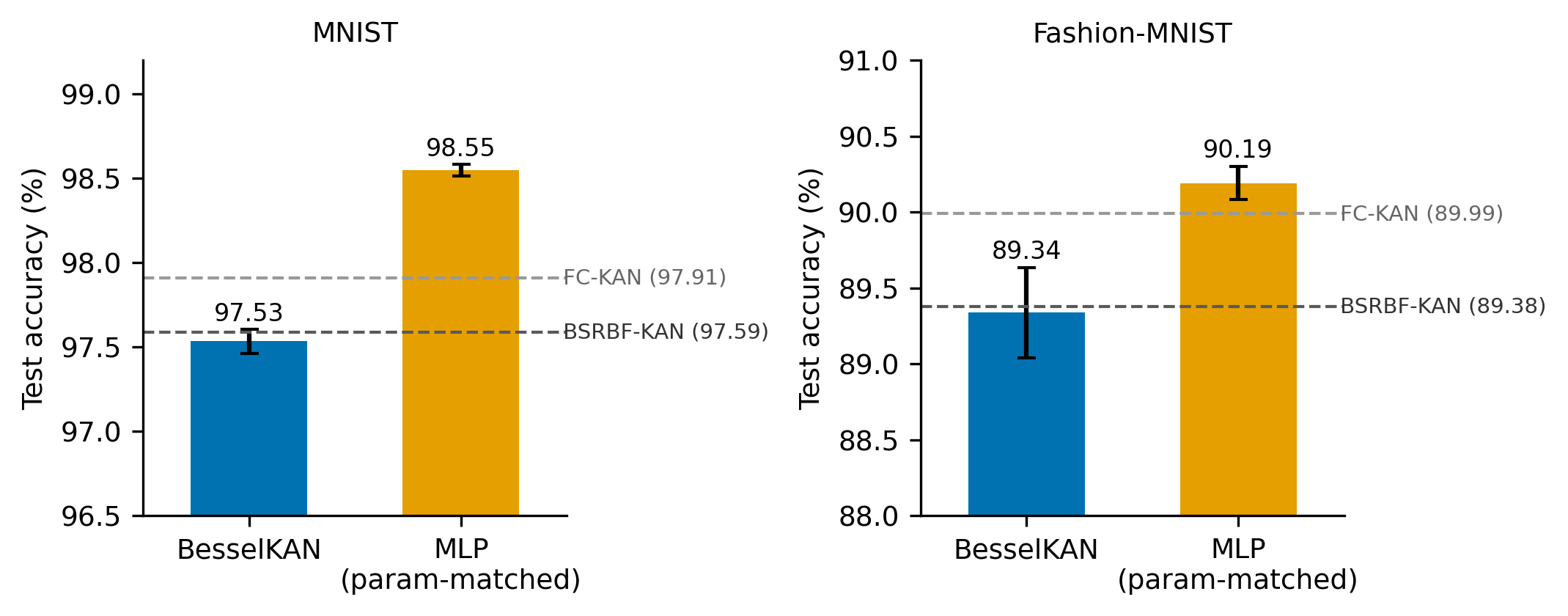}
\caption{Test accuracy of the hybrid KAN and the parameter-matched MLP on MNIST and Fashion-MNIST (mean $\pm$ standard deviation over five seeds); dashed lines mark the accuracies reported for BSRBF-KAN and FC-KAN. Note the truncated vertical axes.}
\label{fig:acc}
\end{figure}

\subsection{Iris ablation}
The spline-only network has the highest mean accuracy (Table~\ref{tab:ablation}) and no variant with Bessel terms exceeds it. The largest gap (5.3 points, against $P=4$) has paired $t=1.73$ over five seeds; the only comparison above the 5\% threshold ($t>2.78$) is spline-only versus mixing ($t=3.21$), which does not survive correction for five comparisons. With 30 test samples (one sample $=3.3$ points) and a partition that changes with the seed, these differences are within the noise of the benchmark.

\begin{table}[htbp]
\centering
\caption{Iris ablation (60 epochs, five seeds).}
\label{tab:ablation}
\small
\begin{tabular}{@{}lrr@{}}
\toprule
Configuration & Params & Accuracy (\%) \\
\midrule
Spline-only ($P=0$) & 360 & $86.0\pm6.4$ \\
Hybrid, $P=2$ & 520 & $84.7\pm6.5$ \\
Hybrid, $P=4$ & 680 & $80.7\pm6.4$ \\
Hybrid, $P=6$ & 840 & $84.7\pm3.8$ \\
Hybrid, $P=4$, mixing & 691 & $82.0\pm6.1$ \\
Single-spline request, $P=6$ & 784 & $85.3\pm6.5$ \\
\bottomrule
\end{tabular}
\end{table}

\subsection{Parallel fusion (exploratory)}
With a single seed (42) and learning rate $10^{-3}$, the add, interaction and concatenation variants reach 97.99\%, 97.79\% and 97.85\% with about 337k parameters (27.7--28.3~s per epoch), against 97.15\% for the single-network hybrid trained in the same experiment (516,640 parameters, 37.0~s per epoch).

\subsection{Basis conditioning}
The spline bases are well conditioned ($\kappa(G)\approx30$ for $M=5$ and $28$ for $M=8$); the Bessel basis with $\nu=0$--$3$ gives $8.4\times10^{4}$; the 8-order Bessel basis and the hybrids with 4 and 6 Bessel orders give $3.7\times10^{15}$, $2.1\times10^{14}$ and $2.7\times10^{16}$, at or beyond double-precision limits. The hybrid basis has lower fit error than either component for all six targets (Table~\ref{tab:approx}), with only a small improvement for the step function.

\begin{table}[htbp]
\centering
\caption{In-sample least-squares fit error (RMSE) on the 500 points.}
\label{tab:approx}
\small
\begin{tabular}{@{}lrrr@{}}
\toprule
Target & Spline (5) & Bessel (4) & Hybrid (5+4) \\
\midrule
$\sin(5x)$ & 0.3632 & 0.3535 & 0.0568 \\
$J_0(3x)$ & 0.0124 & 0.0601 & 0.0006 \\
$J_1(5x)+J_2(3x)$ & 0.1304 & 0.1322 & 0.0163 \\
$x^2$ & 0.0447 & 0.0066 & 0.0000 \\
$|x|$ & 0.0532 & 0.0669 & 0.0107 \\
$\mathbb{1}[x>0]$ & 0.1926 & 0.1861 & 0.1504 \\
\bottomrule
\end{tabular}
\end{table}

\subsection{Learned edge functions}
Clustering the learned first-layer edge functions gives four shape templates whose shares differ between MNIST and Fashion-MNIST (template~3: about 54\% of active edges on MNIST, 18\% on Fashion-MNIST); three templates show an abrupt step at $u=1$. In the last layer the Bessel branch has a larger mean absolute contribution than the spline branch for all ten MNIST classes (spline-to-Bessel ratio about 0.57--0.73), with no systematic difference between the digit groups (1, 4, 7) and (0, 6, 8) (Figures~\ref{fig:edges} and~\ref{fig:mag}).

\begin{figure}[htbp]
\centering
\includegraphics[width=\textwidth]{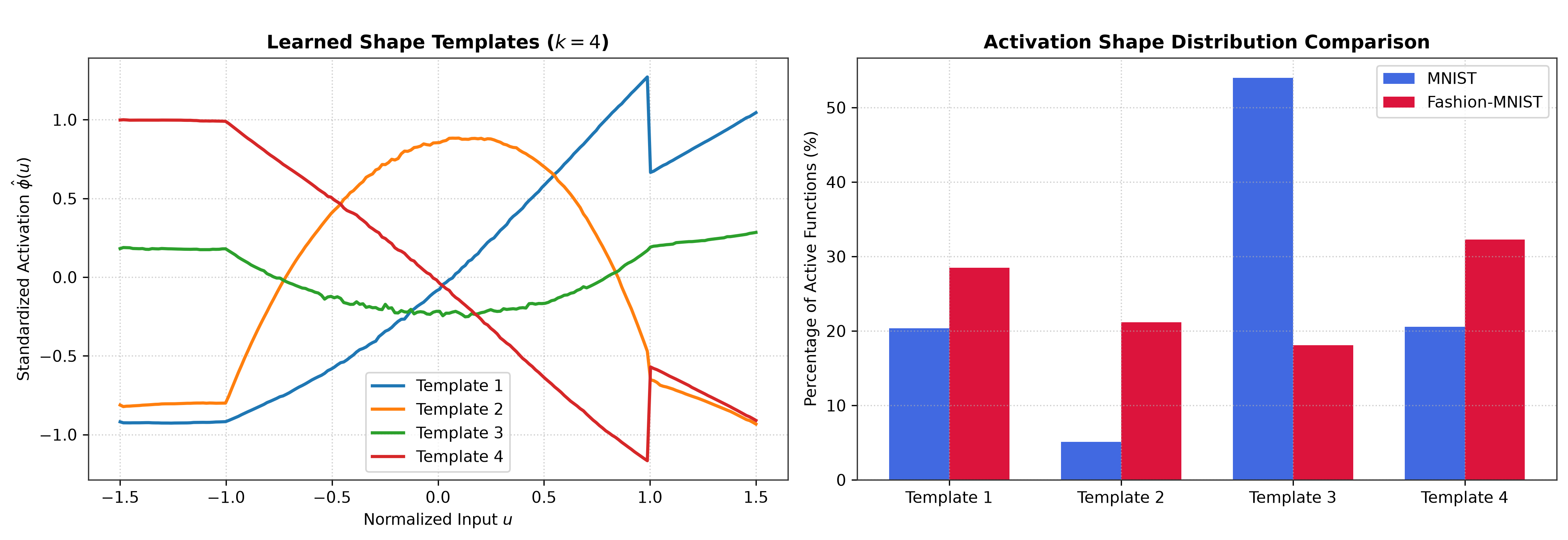}
\caption{Learned first-layer edge functions. Left: four shape templates ($k$-means, $k=4$) of the standardized edge functions $\hat\phi(u)$ over $u\in[-1.5,1.5]$. Right: share of each template among active edge functions on MNIST and Fashion-MNIST.}
\label{fig:edges}
\end{figure}

\begin{figure}[htbp]
\centering
\includegraphics[width=\textwidth]{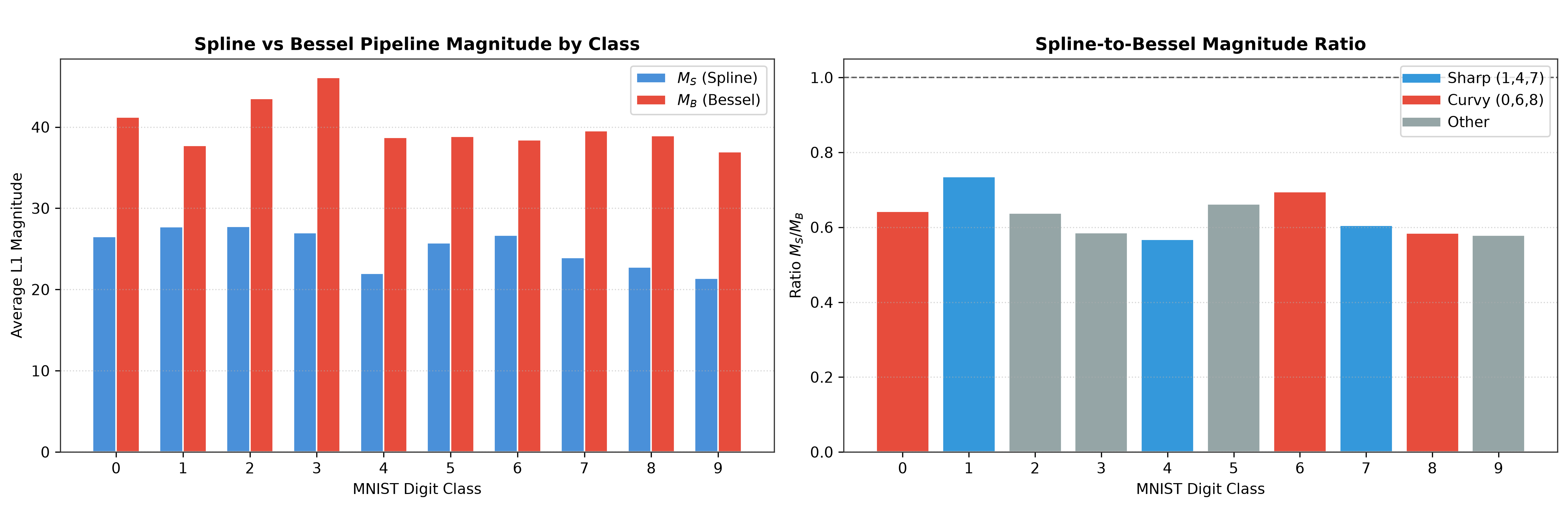}
\caption{Contribution of the spline and Bessel branches of the last layer on MNIST. Left: mean absolute (L1) magnitude per digit class. Right: spline-to-Bessel magnitude ratio per class; colours mark the digit groups (1, 4, 7) and (0, 6, 8).}
\label{fig:mag}
\end{figure}

\section{Discussion}\label{sec:discussion}
\subsection{Accuracy} The hybrid KAN is competitive with published KAN variants on MNIST and Fashion-MNIST (within 0.06 and 0.04 points of the reported BSRBF-KAN, 0.38 and 0.65 points below FC-KAN) but does not exceed a parameter-matched MLP, which is 1.0 and 0.85 points more accurate in every paired seed. The results therefore show no accuracy benefit of the architecture on these benchmarks. The literature values are single numbers whose seeds, epoch selection and training details we did not reproduce.

\subsection{Role of the Bessel terms} The ablation isolating the Bessel path was run only on Iris, where the spline-only network was nominally best and differences are within noise. No spline-only network was trained on the image benchmarks, so the present experiments cannot attribute any of their accuracy to the Bessel terms.

\subsection{Parallel fusion} Separating the branches gave the highest KAN accuracy observed (97.99\%) with 35\% fewer parameters, but this is a single-seed run at a different learning rate whose second branch also contained B-splines; it indicates a direction, not a result.

\subsection{Conditioning and edge functions} Combining the bases lowered the in-sample fit error but made the design matrix extremely ill conditioned ($\kappa(G)\approx10^{14}$--$10^{16}$); because the fit is evaluated on the fitting points and the hybrid has more columns, this does not show better approximation, and we did not measure the conditioning of trained layers. The larger Bessel contribution did not separate digit groups, so the hypothesis that it preferentially serves curved digits is not supported. The step at $u=1$ in the learned edge functions is consistent with the boundary behaviour of the spline basis (Section~\ref{sec:bspline}).

\subsection{Architectural Efficiency and the Parallel Pipeline}

While monolithic hybrid KANs perform competitively against literature baselines, our spectral analysis reveals a severe structural limit. As it evaluates B-splines and high-order Bessel functions on the same edge, it drives the Gram matrix condition number toward $10^{16}$, which caps its scalability.

To bypass this bottleneck, our exploratory parallel pipeline (see Section 2.5.2) decouples the basis functions into a $32+32$ split of dedicated Spline and Bessel branches. This parallel architecture restored mathematical stability and achieved our highest observed KAN accuracy (97.99\% via additive fusion, 97.79\% via interaction). Critically, it compressed the parameter count to $\sim$337,000, which is a 35\% reduction from the monolithic BesselKAN. Separating these families prevents overlapping feature collision, allowing the network to independently leverage localized spline geometry and structural Bessel wave patterns.

However, empirical timings highlight a systemic KAN limitation. Despite a 35\% parameter reduction, the parallel KAN required $\sim$28 seconds per epoch compared to 14.8 seconds for a parameter-matched MLP. This confirms that while KANs offer superior parameter density, the dynamic evaluation of continuous bases cannot currently match the GPU batch parallelization efficiency of fixed-activation MLPs.

\subsection{Limitations} (i) The reported accuracy is the maximum over epochs on the test set, which is optimistic; a validation split is needed. (ii) The KAN used learning rate $3\times10^{-3}$ and the MLP $10^{-3}$, and neither was tuned. (iii) The spline basis vanishes for inputs clamped to exactly $+1$ (Section~\ref{sec:bspline}) and layer normalization places many inputs outside $[-1,1]$; the effect on accuracy was not measured. (iv) CIFAR-10 was trained on flattened inputs without augmentation and is not comparable with convolutional networks. (v) Iris has 30 test samples. (vi) Times come from one machine and backend.

\section{Conclusions}

Our research systematically evaluated the integration of Bessel functions into the Kolmogorov-Arnold Network (KAN) framework, seeking to combine the local adaptivity of B-splines with the continuous, oscillatory priors of Bessel functions. Through rigorous empirical benchmarking and spectral matrix analysis, we’ve identified a critical limitation in current hybrid KAN designs, i.e. combining distinct functional families on a single monolithic edge leads to catastrophic geometric overlap, driving the design matrix into severe numerical ill-conditioning ($\kappa(G) \approx 10^16$).

To resolve this, we introduced a Parallel BesselKAN architecture that isolates the functional basis families into distinct processing pipelines. This structural decoupling proved highly effective for parameter efficiency. In our exploratory trials, the parallel formulation achieved a peak MNIST accuracy of 97.99\%, marginally outperforming the monolithic hybrid (97.53\%) while drastically reducing the network capacity by 35\% (down to $\sim$337k parameters). The parallel pipeline successfully proves that KAN architectures can be compressed and stabilized by treating different mathematical bases as internal ensembles rather than per-edge additions.
 
Despite these representational efficiencies, our comprehensive benchmarks against standard Multi-Layer Perceptrons reveal enduring practical limitations for the KAN paradigm in computer vision. Standard parameter-matched MLPs consistently outperformed the hybrid KAN variants in both raw accuracy and computational speed, training nearly three times faster per epoch. Furthermore, the architecture’s performance drop on the complex CIFAR-10 dataset underscores the necessity of spatial inductive biases. We conclude that while Parallel BesselKANs provide a mathematically robust, highly parameter-efficient alternative to monolithic networks, future research must prioritize integrating these parallel basis pipelines into Convolutional KAN (ConvKAN) architectures and optimizing GPU-level continuous function evaluation to truly rival MLPs in high-dimensional spatial tasks.

%%===========================================================================================%%
%% If you are submitting to one of the Nature Portfolio journals, using the eJP submission   %%
%% system, please include the references within the manuscript file itself. You may do this  %%
%% by copying the reference list from your .bbl file, paste it into the main manuscript .tex %%
%% file, and delete the associated \verb+\bibliography+ commands.                            %%
%%===========================================================================================%%

\bibliography{sn-bibliography}% common bib file
%% if required, the content of .bbl file can be included here once bbl is generated
%%\input sn-article.bbl

\end{document}